\definecolor{cellbackground}{rgb}{0.95, 0.96, 0.98}
\def\arrvline{\hfil\kern\arraycolsep\vline\kern-\arraycolsep\hfilneg}
\newtheorem{theorem}{Theorem}
\DeclareMathOperator*{\argmin}{arg\,min}
\newcommand{\R}{\mathbb{R}}
\newcommand{\HH}{\mathcal{H}}
\newcommand{\X}{\mathcal{X}}
\newcommand{\x}{\mathbf{x}}
\newcommand{\z}{\mathbf{z}}
\newcommand{\uu}{\mathbf{u}}
\newcommand*{\centerfloat}{%
  \parindent \z@
  \leftskip \z@ \@plus 1fil \@minus \textwidth
  \rightskip\leftskip
  \parfillskip \z@skip}
\newcommand{\PKM}{{\sc SPKM}} % my proposal
\bcwurl\url{https://archive.ics.uci.edu/ml/datasets/Breast+Cancer+Wisconsin+%28Prognostic%29}
\urldef\paurl\url{https://zenodo.org/record/3464542#.XmEEVelS8UE}
\urldef\bostonurl\url{https://scikit-learn.org/stable/modules/generated/sklearn.datasets.load_boston.html#sklearn.datasets.load_boston}

\begin{document}

\title{Learning primal-dual sparse kernel machines\thanks{This work has been in part funded Academy of Finland grants 310107 (MACOME) and 334790 (MAGITICS).}
}

% ===================================================================================
% authors and institutions

% https://tex.stackexchange.com/questions/214404/add-affiliations-to-the-authors-name-in-the-article-class

\author[1]{Riikka Huusari}
\author[2]{Sahely Bhadra}
\author[3]{Cécile Capponi}
\author[3]{Hachem Kadri}
\author[1]{Juho Rousu}
\affil[1]{Helsinki Institute for Information Technology HIIT, Department of Computer Science, Aalto University, Espoo, Finland}
\affil[2]{Indian Institute of Technology, Palakkad, Kerala, India}
\affil[3]{QARMA, LIS, Aix-Marseille University, Marseille, France}

\setcounter{Maxaffil}{0}
\renewcommand\Affilfont{\itshape\small}

\date{}

\maketitle

\begin{abstract}
    Traditionally, kernel methods rely on the representer theorem which states that the solution to a learning problem is obtained as a linear combination of the data mapped into the reproducing kernel Hilbert space (RKHS). While elegant from theoretical point of view, the theorem is prohibitive for algorithms' scalability to large datasets, and the interpretability of the learned function. In this paper, instead of using the traditional representer theorem, we propose to search for a solution in RKHS that has a pre-image decomposition in the original data space, where the elements don't necessarily correspond to the elements in the training set. Our gradient-based optimisation method then hinges on optimising over possibly sparse elements in the input space, and enables us to obtain a kernel-based model with both primal and dual sparsity. We give theoretical justification on the proposed method's generalization ability via a Rademacher bound. Our experiments demonstrate a better scalability and interpretability with accuracy on par with the traditional kernel-based models. 
\paragraph{keywords}{Kernel methods; Sparsity; Latent variable models}
\end{abstract}

\section{Introduction}

Kernel methods in various machine learning contexts have a long and successful history~\citep{hofmann2008kernel}. They are theoretically well-founded approaches for efficiently learning non-linear functions thanks to the so-called kernel trick, making it possible to efficiently calculate inner products in a high-dimensional (even infinite-dimensional) reproducing kernel Hilbert space (RKHS) without explicitly computing the associated feature map.

While very successful in many applications and theoretically well-founded, kernel methods have their drawbacks. The issue most often mentioned is the dependency on the $n\times n$ kernel matrix (where $n$ is the number of training samples), giving rise to ${\Omega}(n^2)$ factor in memory and time complexities of the algorithms. While this bottleneck can in part be worked around with techniques like kernel approximation -- such as the Nyström method~\citep{williams2001using,drineas2005nystrom} and Random Fourier features (RFF)~\citep{rahimi2008random} -- as well as GPU optimization \citep{meanti2020kernel}, kernel methods do not scale up to very large datasets as well as one could hope for. 

Another drawback of traditional kernel machines is the limited interpretability. Kernel-based methods search for a solution to a learning problem in reproducing kernel Hilbert spaces (RKHS), that often in practice can be infinite-dimensional. With these solutions there is a lack of explicit representation of the learned model in the original feature space (i.e. lack of exact pre-images) and the lack of sparsity of the models: the models generally depend on all the input variables, and generally a preprocessing step prior learning is needed to select features \citep{guyon2003introduction}. The interpretability bottleneck is not alleviated by the commonly used kernel approximation approaches such as Nyström and RFF methods, as they produce dense models in terms of the input variables. 

In this paper, we propose an alternative for addressing these concerns: a kernel-based non-linear latent variable model, where the primal sparsity of the basis vectors---the pre-images of non-linear latent variables---can be controlled. The model does not require the $n\times n$ kernel matrix, which alleviates the scalability bottleneck. The learned decision function is represented as a combination of a small number of basis vectors in the original feature space, that represent the different aspects of the data. Thus, our model obtains both primal (low number of non-zeros in basis vectors) and dual (low number of basis vectors) sparsity.
This sparse representation facilitates interpretation of the learned model: one can directly examine which features are being used by the model, and the non-zero feature combinations in each basis vector further facilitate the interpretation.

Our main conceptual starting point is, 
instead of using the traditional representer theorem to find the dual solutions to the learning problems by optimising $n$ dual variables, to perform the optimisation in the original input space by making use of the pre-image of the kernel, which allows us to learn a small number of basis vectors.  This allows the method to scale better to large datasets. This technique was previously successfully used by \citet{uurtio2019large} in the context of canonical correlation analysis, and here we extend it to supervised learning.
Indeed, while the time complexity of traditional kernel methods generally have a quadratic dependency on the number of training points, the complexity of our (full) gradient update is $\mathcal{O}(dnR)$ for commonly used loss functions and kernels, in which $d$ is the dimensionality of the data and $R\ll n$ is the number of basis vectors.

In addition, compared to commonly used kernel approximation methods which are generally unsupervised techniques used as preprocessing methods prior to learning, our method learns an optimal latent representation for the supervised learning task. Our method is very general in the sense, that it can be applied with any differentiable loss function, including squared loss, log-loss and squared hinge loss.

We next move onto describing our method and framework (Section~\ref{sec:method}), which we illustrate in Section~\ref{sec:illustration}.
We show the experimental performance in Section~\ref{sec:exp}, before discussing the related works~(Section~\ref{sec:related}) and concluding~(Section~\ref{sec:concl}).

\paragraph{Notation.} Throughout the paper, we denote scalars, vectors and matrices as $a$, $\mathbf{a}$ and $\mathbf{A}$, respectively.
We consider supervised learning problems with labeled data as $S = \{(\mathbf{x}_i,y_i)\}_{i=1}^n$, in which $\mathbf{x}_i\in\R^d$ and $y_i\in\{-1, +1\}$ for classification and $y_i\in\R$ for regression. 
We also denote the data in matrix form as $\mathbf{X} = [\mathbf{x}_i]_{i=1}^n \in \R^{n\times d}$, and gather all the labels in vector $\mathbf{y}\in\R^n$.

\section{Supervised learning with kernel pre-images}\label{sec:method}

In this paper, our focus is to work with kernel functions $k:\X\times\X\rightarrow\R$, which are symmetric and positive semi-definite (PSD), intuitively seen as measures of similarity between two data samples. The power of the kernel methods hinges on the so-called kernel trick: a kernel can be written as an inner product of the data samples mapped into some feature space, that is, $k(\x,\z)=\langle \phi(\x), \phi(\z) \rangle$. Here $\phi:\X\rightarrow\HH$ is the (non-unique) function mapping data samples from the original data space to a reproducing kernel Hilbert space (RKHS) $\HH$ associated with the kernel. 
Due to this kernel trick, it is possible to non-expensively map the data into very high-dimensional spaces (even infinite-dimensional spaces), in which linear models can then be applied. 

The RKHS of a kernel consists of functions $f:\X\rightarrow\R$. Thus when solving supervised learning problems with kernel methods, it is standard to assume that the function $f$ to be learned belongs to the RKHS. 
Traditionally, kernel methods rely on the representer theorem for the solutions. The theorem states, that the optimal solution to a given learning problem, $w\in\HH$, can be written as linear combination of the training data mapped to the RKHS, or, $w=\sum_i\alpha_i\phi(\x_i)$. In other words, the solution $w$ lies within the span of the images of the training data points in the RKHS (Figure \ref{fig:preimg_spaces}). 

In this paper, we propose to search for a solution $f\in\HH$, that can be characterised in terms of a small number latent factors $\phi(\uu)$, all of which have a pre-images in the original data space, $\uu =\phi^{-1}(w)\in\mathcal{X}$, the basis vectors (note abuse of the term basis; we do not assume a linearly independent spanning set).
Doing this, we restrict ourselves into solutions lying in the image of original data space in the RKHS, instead of considering the full space. %
Yet, unlike in conventional kernel methods, the solution $f$ is not restricted to the span of the training data, since the images of the basis vectors $\phi(\uu)$ can lie outside the span (see Figure \ref{fig:preimg_spaces}).

This results in a non-linear latent variable model, parameterised by the basis vectors $\uu$: we can recover solutions in the original data space that are used non-linearly in the decision function.
Analogously to the dual variables in classical kernel machines, we assign every latent factor $\phi(\uu_r)$ an importance weight $c_r$.
As a whole, this amounts to learning a function $f\in\HH$, having the form 
\begin{equation}\label{eq:preimg_model}
  f(\cdot)=\sum_{r=1}^R c_r \phi(\uu_r) 
\end{equation}
with $R\ll n$, $c_r\in\R$ and $\uu_r\in\mathcal{X}$. We can now write $f(\x_i) = \sum_r c_rk(\x_i, \uu_r) = k(\x_i, \mathbf{U})\mathbf{c}$, where $k(\x_i, \mathbf{U})$ stands for the $1\times R$ matrix of kernel evaluations between $\x_i$ and every $\uu_r$, and the $c_r$ are stacked in $\mathbf{c}\in\R^R$.

Compared to the traditional representer theorem, the $\uu_r$ are not resctricted to match the data samples in the training set, and also much smaller amount of these basis vectors are used to represent $f$. In particular, a connection to traditional models such as SVM or kernel ridge regression (KRR) can be readily seen: take all training examples as the basis vectors  and interpret the $c_r$'s as the dual variables. Yet with our model, as we illustrate in Section~\ref{sec:illustration}, $R$ can be very small compared to the number of training examples.
Additionally, the fact that we have an explicit representation of the pre-images of the latent factors $\phi(\uu_r)$ in terms of the basis vectors $\uu_r$ lets us control the primal sparsity of the latent factors, thus achieving a non-linear model (through a non-linear kernel) with primal (few non-zero elements in basis vectors) and dual (small number of basis vectors) sparsity. It is also straight-forward to extend the model by group or structured sparsity \citep{huang2010benefit,jia2010factorized}, to allow incorporation of domain knowledge in the model learning.

\begin{figure}
    \centering
    \begin{tikzpicture}[scale=0.85]
%% RKHS Blob
\node[] at (0.9,2.2) {$\mathcal{H}$};
\path[draw,use Hobby shortcut,closed=true]
(0,0) .. (.4,0.8) .. (0.6,2) .. (.3,2.3) .. (-1.1,1.5) .. (-1,.5);
% \node[] at (0,2) {$w'$};
% \node at (0, 1.7) {\textbullet};

%% Im(X) blob
\node[] at (1.4,0.8) {Im($\mathcal{X}$)};
\path[draw,use Hobby shortcut,closed=true]
(-.1,0.4) .. (.2,1) .. (-.5,1.3) .. (-0.75,0.8) .. (-0.4,.3);
\node[] at (-0.3,0.8) {$\phi(\uu)$};
\node at (-.3, 0.5) {\textbullet};
\path[draw,use Hobby shortcut,closed=false]
(1.2,0.6) .. (1,.5) .. (.1,0.5);

%% Span(S) blob
\node[] at (1.5,1.5) {Span($S$)};
\path[draw,use Hobby shortcut,closed=true]
(-.1,1.1) .. (.3,1.5) .. (-.3,2.1) .. (-0.7,1.8) .. (-0.5,1.1);
\node[] at (-0.2,1.9) {$w$};
\node at (-.2, 1.6) {\textbullet};
\path[draw,use Hobby shortcut,closed=false]
(1.25,1.3) .. (.9,1.25) .. (.25,1.3);

%% X Blob
\node[] at (-4.6,1.7) {$\mathcal{X}$};
\path[draw,use Hobby shortcut,closed=true]
(-3.5,0.2) .. (-4.1,1) .. (-4.3,1.2) .. (-3,1.2);
\node[] at (-3.5,1.5) {$\mathbf{u}$};
\node at (-3.5, 1.2) {\textbullet};

% arrows between blobs
\node (u1) at (-3.5, 1.2+0.05)  {};
\node (u2) at (-3.5, 1.2-0.05)  {};
\node (phiu) at (-.3, 0.5) {};
\node (w) at (-.2, 1.6) {};
\node (qm) at (-1.9,1.9) {};
% \node (w2) at (0, 1.7)  {};
\draw[->] (u1)  to [out=20,in=160] (phiu);
\draw[->] (phiu)  to [out=190,in=-30] (u2);
\draw[->] (qm)  to [out=10,in=160] (w);
\node[] at (-2.7,1.65) {$\phi$};
\node[] at (-1.9,.8) {$\phi^{-1}$};
\node[] at (-2,1.9) {\large ?};

\end{tikzpicture}
    \caption{$\phi(\uu)$ always has a pre-image in data space $\mathcal{X}$ (not necessarily coinciding with the training data $S$), while the representer theorem solution $w$ usually does not.}
    \label{fig:preimg_spaces}
\end{figure}


% ---------------------------------------------------------------------------------------------
\subsection{The optimisation procedure}\label{sec:algo}

In this section we go through the practical aspects of learning the $f$ in our framework. In the following discussion we consider $\mathcal{X}=\R^d$. 

With the model (\ref{eq:preimg_model}), we consider an iterative two-step learning approach consisting of learning both the basis vectors $\mathbf{U} =[\mathbf{u}_r]_{r=1}^R$ and the multipliers $c_r$. 
To this end, we consider the optimisation problem
\begin{equation}\label{eq:opt_probl}
    \argmin_{\mathbf{U}, \mathbf{c}} \mathcal{L}\left( k(\mathbf{X}, \mathbf{U})\mathbf{c} , \mathbf{y} \right) + \gamma \Theta(\mathbf{U}) + \lambda \|\mathbf{c}\|_a,
\end{equation}
in which $\mathcal{L}$ is a loss function, $\Theta$ a regulariser, $a=\{1,2\}$, and $k(\mathbf{X}, \mathbf{U})$ stands for the $n\times R$ matrix of kernel function evaluations between the data and basis vectors; $f(\x_i) = k(\x_i, \mathbf{U})\mathbf{c}$.
We optimise this problem in iterative fashion.

As a first step, we optimise the the basis vectors from the sub-problem
\begin{equation}\label{eq:u_probl}
    \argmin_{\mathbf{U}} \mathcal{L}\left( k(\mathbf{X}, \mathbf{U})\mathbf{c} , \mathbf{y} \right) + \gamma \Theta(\mathbf{U}).
\end{equation}
We propose to optimise the basis vectors $\mathbf{U} =[\mathbf{u}_r]_{r=1}^R$ with gradient descent. 
Thus $\mathcal{L}$ in~(\ref{eq:u_probl}) can be any differentiable loss function (squared loss, squared hinge, log loss, exp loss etc.). The method can be extended to piecewise differentiable convex losses (such as hinge loss) by subgradient approaches; however, we leave this as future work.

It is easy to calculate the derivatives w.r.t. $\mathbf{u}_r$ for many popular kernels such as the Gaussian/RBF kernel and polynomial kernels, as well as commonly used differentiable loss functions (see supplementary material for examples). 
By not needing to calculate the kernel matrices, we gain in computational complexity w.r.t. the number of samples, $n$. The complexity of calculating the derivative w.r.t. $\mathbf{U}$ is $\mathcal{O}(dnR)$ for typical kernels and loss functions.
Also, with our method there is no need to keep a kernel matrix in memory, as each iteration calculates kernel evaluations $k(\x_i,\uu_r)$ on the fly for the current $\uu_r$ vectors.

Instead of using the traditional regulariser $\|f\|_\HH$, we in~(\ref{eq:u_probl}) regularise the $\uu_r$  and $\mathbf{c}$ independently.
In order to obtain primal sparsity, we consider $l_1$-norm regularisation, which results in using proximal gradient methods or $l_1$ ball projection~\citep{duchi2008efficient} as part of the optimisation process. We note here in passing that it is easy to incorporate group sprase or structured sparse norm to allow leveraging prior information of the application domain. We leave this extension to future work. We note that, despite the lack of traditional regulariser, we are able to obtain generalisation guarantees with our algorithm (see Section~\ref{sec:theory}).

In order to solve (\ref{eq:opt_probl}) for $\mathbf{U}$, the $c_r$ are initialised based on the user's initial guess or prior knowledge about the data: for example in classification by how many basis vectors from positive and negative classes are searched for and their relative importances.
As a second step in our optimisation procedure, we after optimizing over the vectors $\uu_r$, tune the weights $c_r$ from the sub-problem
\begin{equation}\label{eq:c_probl}
\min_\mathbf{c} \mathcal{L} ( k(\mathbf{X,U})\mathbf{c} , \mathbf{y}) +\lambda\|\mathbf{c}\|_a  
\end{equation}
in which $a$ equals either 1 or 2; in the former case with sparse regularisation we can think of filtering out unnecessary $\uu_r$ vectors (if initial $R$ is estimated higher than needed). 
Here the optimisation is over $\mathbf{c}$ of length $R\ll n$ and thus not as computationally significant as optimising the $\mathbf{u}_r$.

Due to the non-convex optimisation problem, the optimisation procedure needs to be repeated several times using different initialisation for the $\uu_r$ (by default $K=5$ iterations are used). The two-step procedure for this sparse pre-image kernel machine (\PKM{}) is summarized in Algorithm~\ref{algo:preimg}, which can be implemented for either classification or regression, by choosing an appropriate loss function.

\begin{algorithm}[tb]\caption{Sparse pre-image kernel machine (\PKM{})}\label{algo:preimg}
\begin{algorithmic}
\STATE {\bfseries Input:}  $\mathbf{X}\in\R^{n\times d}$; \, $\mathbf{y}\in\{-1,+1\}^n$;  \, $R\in\mathbb{N} \, ( R \ll n)$; \, 
a kernel function $k$ and its gradient; \, $\gamma$ to control sparsity in $\mathbf{U}$; \, $\lambda>0$; \, $K$

\STATE Initialise $\mathbf{c}\subset \mathbf{y}$ 

\FOR{$i=1$ {\bfseries to} $K$}
\STATE Initialise $\mathbf{U}_i\in\R^{R,d}$ randomly
\REPEAT
  
  \STATE // {\em step 1: update $\mathbf{U}$ with gradient-based method}
  \STATE $\mathbf{U}_i\gets \argmin_{\mathbf{U}} \mathcal{L}\left( k(\mathbf{X}, \mathbf{U})\mathbf{c}, \mathbf{y} \right) + \gamma \Theta(\mathbf{U})$ 
  
  \STATE // {\em step 2: update $\mathbf{c}$}
  \STATE{$\mathbf{c}\gets \argmin_\mathbf{c} \mathcal{L}  (  k(\mathbf{X,U})\mathbf{c} , \mathbf{y}) + \lambda \|\mathbf{c}\|_a  $}
\UNTIL{Convergence or maximum number of iterations}

\ENDFOR

\STATE {\bfseries Output:} $\mathbf{U}$, $\mathbf{c}$ % 
\end{algorithmic}
\end{algorithm}

% ---------------------------------------------------------------------------------------------

\subsection{SPKM extensions}

It is easy to extend our framework to combinations of different kernels. Namely, we can consider
\begin{equation}\label{eq:model}
  f(\cdot)=\sum_{j=1}^{J}\sum_{r=1}^{R_j} c_{rj} \phi_j(\uu_{rj})  ,
\end{equation}
in which the $J$ kernels may or may not work with same set of data. 
It is then natural to consider a dataset where the data can represented with several views, $x_i=(\x_i^1, ..., \x_i^V)$.
The views can be of vastly different data types, but they all describe the same sample and correspond to a common label. 
In multi-view learning one of the most prominent approaches is the multiple kernel learning~(MKL)~\citep{gonen2011multiple}, in which a combination of kernels from different views are used for the learning task, and these combination weights are learned.
Our framework thus extends also to the MKL paradigm, but instead of having one, general multiplier for the whole view, we find and weight basis vectors of the views, while at the same time controlling both primal and dual sparsity of the latent factors.  
With the MKL extension our SPKM can thus be seen as a kernel learning method. Yet, even with only one kernel this can be argued to be the case: learning the optimal $\uu_r$ within a given learning problem can be seen as learning a part of the kernel, since we learn the optimal elements with which the kernel should be used with and in this sense the kernel is not fixed in advance.

We can also consider the multi-class classification problem, and formulate a version of SPKM which learns basis vectors for the different classes. Namely, we can consider matching $k(\mathbf{X}, \mathbf{U})\mathbf{C}$ to one-hot encoded labels $\mathbf{Y}\in n\times c$ (containing values -1 and +1). Here the $\mathbf{U}\in R\times d$ is shared between the tasks, while $\mathbf{C}\in R \times c$ weights these for the various binary classifications indicated by the columns of $\mathbf{Y}$. The approach is similar to one-vs-all scheme, however in this extension all the classifiers are learned jointly via the shared $\mathbf{U}$. 

It is also possible to extend our method to deal with data coming in pairs, as $(\mathbf{x}_i,\mathbf{z}_i)\in \mathcal{X}\times \mathcal{Z}$. For pairwise data, a natural way to construct a kernel is to consider $k((\mathbf{x},\mathbf{z}),(\mathbf{x}',\mathbf{z}'))=k_x(\mathbf{x},\mathbf{x}')k_z(\mathbf{z},\mathbf{z}')$, in which case $\phi((\mathbf{x},\mathbf{z}))=\phi(\mathbf{x})\otimes\phi(\mathbf{z})$. In solving the pre-image problem with pairwise kernels, we consider pairs of pre-images $(\mathbf{u}_r, \mathbf{v}_r)\in  \mathcal{X}\times \mathcal{Z}$. In this case, the algorithm is modified slightly: instead of a single step for solving the $\mathbf{u}_i$, we now iterate over solving $\mathbf{u}_i$ and $\mathbf{v}_i$.
Similarly to regular features, it is easy to consider also MKL-like combinations of different views and/or kernels also with the pairwise kernels \citep{cichonska2018learning}.

% ---------------------------------------------------------------------------------------------
\subsection{Rademacher complexity bound}\label{sec:theory}

We study the generalisation performance of our algorithm via its Rademacher complexity.
We use the standard assumption that a kernel function is bounded, that is, $k(\x, \x)\leq \tau$ for any $\x$. Many kernels fulfill this assumption, including the popular RBF kernel. 
Considering the $l_1$ regularization for the coefficients $c_r$, the hypothesis class for our model is 
\[
    \hat{\mathcal{P}}=\left\{ \x\mapsto \langle w, \phi(\x) \rangle : w=\sum_{i=1}^R c_i\phi(\uu_i), \|c\|_{1}<\Lambda, \uu_i\in\R^d \right\}.
\]
We have not included here the sparsity regularization over the $\uu_r$, since it turns out it is not needed for obtaining a bound. The constraint that $\uu_i\in\R^d$ is specific to our framework as so far as we need vectorial data in order to compute derivatives; however the bound remains the same even with $\uu_i\in\mathcal{X}$ with more general $\mathcal{X}$.
The following theorem (the proof can be found in the supplementary material) provides a Rademacher complexity bound for the hypothesis class $\hat{\mathcal{P}}$.
\begin{theorem}
The Rademacher complexity for our hypothesis class $\hat{\mathcal{P}}$ can be bounded as
\begin{equation}\label{eq:rademacher_bound}
    \mathcal{R}_S(\hat{P}) \leq \sqrt{\frac{\tau^2\Lambda^2}{n}}. %= \sqrt{\frac{r^4\Lambda^2}{n}}.
\end{equation}
\end{theorem}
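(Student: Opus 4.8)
The plan is to reduce the bound to the classical estimate for the Rademacher complexity of a norm-ball of linear predictors in $\HH$. The only non-standard ingredient is the observation that, although the members of $\hat{\mathcal{P}}$ are parametrised in a non-convex way through the basis vectors $\uu_i$, every admissible $w$ still has small RKHS norm. Concretely, for $w=\sum_{i=1}^R c_i\phi(\uu_i)$ with $\|c\|_1<\Lambda$, the triangle inequality in $\HH$ together with the boundedness assumption $k(\uu_i,\uu_i)\le\tau$ gives
\[
  \|w\|_{\HH}\le\sum_{i=1}^R |c_i|\,\|\phi(\uu_i)\|_{\HH}=\sum_{i=1}^R |c_i|\sqrt{k(\uu_i,\uu_i)}\le\sqrt{\tau}\,\|c\|_1<\sqrt{\tau}\,\Lambda .
\]
Hence $\hat{\mathcal{P}}\subseteq\mathcal{B}:=\{\,\x\mapsto\langle w,\phi(\x)\rangle:\|w\|_{\HH}\le\sqrt{\tau}\,\Lambda\,\}$, and since the empirical Rademacher complexity is monotone under set inclusion, it suffices to bound $\mathcal{R}_S(\mathcal{B})$. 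This step already explains why neither the number of basis vectors $R$ nor the primal ($l_1$) regularisation of the $\uu_i$ appears in the final bound: the containment above holds for every finite $R$ and every choice of the $\uu_i$.

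For the ball $\mathcal{B}$ the argument is the textbook one. Writing $\sigma=(\sigma_1,\dots,\sigma_n)$ for the Rademacher vector and using linearity of the inner product,
\[
  \mathcal{R}_S(\mathcal{B})=\frac1n\,\E_\sigma\!\left[\sup_{\|w\|_{\HH}\le\sqrt{\tau}\Lambda}\Big\langle w,\textstyle\sum_{i=1}^n\sigma_i\phi(\x_i)\Big\rangle\right]=\frac{\sqrt{\tau}\,\Lambda}{n}\,\E_\sigma\Big\|\textstyle\sum_{i=1}^n\sigma_i\phi(\x_i)\Big\|_{\HH},
\]
where the last equality holds because the supremum of a linear functional over a ball of radius $\sqrt{\tau}\Lambda$ equals $\sqrt{\tau}\Lambda$ times the norm of its representer. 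Applying Jensen's inequality to move the expectation inside the square root and using $\E[\sigma_i\sigma_j]=\delta_{ij}$,
\[
  \E_\sigma\Big\|\textstyle\sum_{i}\sigma_i\phi(\x_i)\Big\|_{\HH}\le\sqrt{\E_\sigma\textstyle\sum_{i,j}\sigma_i\sigma_j\,k(\x_i,\x_j)}=\sqrt{\textstyle\sum_{i=1}^n k(\x_i,\x_i)}\le\sqrt{n\tau}.
\]
Combining the two displays yields $\mathcal{R}_S(\hat{\mathcal{P}})\le\mathcal{R}_S(\mathcal{B})\le\frac{\sqrt{\tau}\,\Lambda}{n}\sqrt{n\tau}=\sqrt{\tau^2\Lambda^2/n}$, which is exactly \eqref{eq:rademacher_bound}.

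I do not expect a genuine obstacle: the substance of the proof is the first-paragraph observation that the non-convex, sparsely parametrised hypothesis set is sandwiched inside an ordinary RKHS ball, after which everything is routine. The only points needing a little care are (i) that the bound should be phrased with the closed ball / non-strict inequality, which is harmless since $\mathcal{R}_S$ of the open and closed balls coincide; (ii) keeping track of the two separate factors $\sqrt{\tau}$ — one from bounding $\|\phi(\uu_i)\|_{\HH}$ and one from bounding $\|\phi(\x_i)\|_{\HH}$ — which is what produces $\tau^2$ rather than $\tau$ in the result; and (iii) checking that the convention used for $\mathcal{R}_S$ (with or without an absolute value around the sum over $i$) is immaterial, since $\sup_{\|w\|\le B}|\langle w,v\rangle|=B\|v\|$ as well.
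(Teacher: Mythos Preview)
Your proof is correct and follows essentially the same route as the paper: both bound $\|w\|_{\HH}\le\sqrt{\tau}\Lambda$ via the triangle inequality and the kernel bound, then use Jensen and $\E[\sigma_i\sigma_j]=\delta_{ij}$ to control $\E_\sigma\|\sum_i\sigma_i\phi(\x_i)\|$. The only cosmetic difference is that you first package the norm estimate as a containment $\hat{\mathcal{P}}\subseteq\mathcal{B}$ and invoke monotonicity plus the exact sup over the ball, whereas the paper applies Cauchy--Schwarz inline and bounds the two factors step by step.
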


It is important to note that our bound is obtained over both $\mathbf{U}$ and $\mathbf{c}$, even if it turns out that assumptions on $\mathbf{U}$ are not needed if kernel is assumed to be bounded. Even in this case without explicit regularization over $\mathbf{U}$ we have guarantee of generalization. 
This generalisation guarantee is obtained without direcly using the usual assumption of $\|f\|_\mathcal{H}<\Gamma$.  
In our result, compared to standard setting, 
we replace $\Gamma^2$ with $\tau\Lambda^2$. As we regularise the two parts of our $f$ separately ($\mathbf{c}$ and $\mathbf{U}$), we depend on these two parameters, instead of just $\Gamma$.
Moreover, for many popular kernels (such as the RBF kernel we use in our experiments), $\tau=1$, and in this special case our bound equals the traditional. Another point of view to this is to note that in our framework $\|f\|_\mathcal{H}=\mathbf{c}^\top k(\mathbf{U}, \mathbf{U})\mathbf{c}$ and that when the kernel is bounded, bounding $\mathbf{c}$ directly bounds $\|f\|_\mathcal{H}$. 
Yet comparison to the standard setting of learning with kernels is not the most relevant to make. In a sense, we are operating in paradigm comparable to kernel learning, as we learn the optimal samples with which to use the kernel with for the learning task.

A generalisation bound can be obtained from this by using well-known techniques~\cite{bartlett2002rademacher,koltchinskii2002empirical,mohri2018foundations}.

% ---------------------------------------------------------------------------------------------

\section{Illustrations}\label{sec:illustration}

In this section we illustrate our \PKM{} method, first in simple classification and regression settings, then with the popular MNIST dataset for multi-class classification.

\begin{figure*}[tb]
\centering
\begin{subfigure}{\textwidth}
  \centering
    \includegraphics[width=0.24\linewidth]{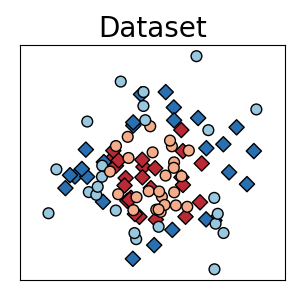}
    \includegraphics[width=0.24\linewidth]{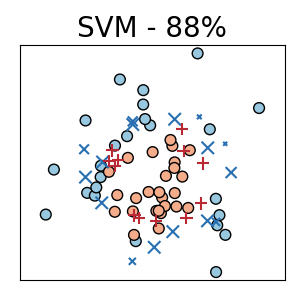}
    \includegraphics[width=0.24\linewidth]{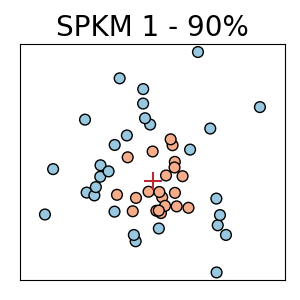}
    \includegraphics[width=0.24\linewidth]{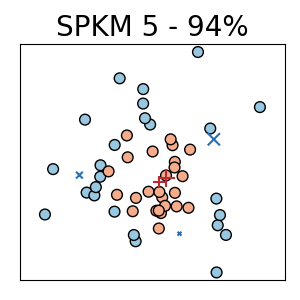}
    \caption{Classification -- performance is measured with accuracy score.}
    \label{fig:illustration_cl}
\end{subfigure}%

\begin{subfigure}{\textwidth}
  \centering
    \includegraphics[width=0.24\linewidth]{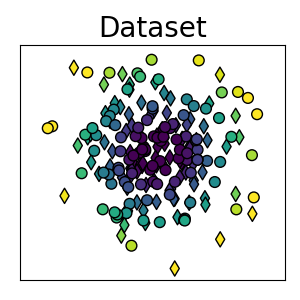}
    \includegraphics[width=0.24\linewidth]{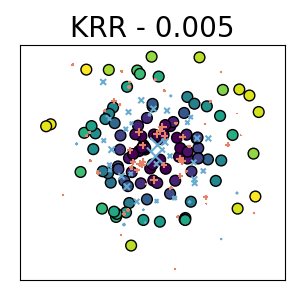}
    \includegraphics[width=0.24\linewidth]{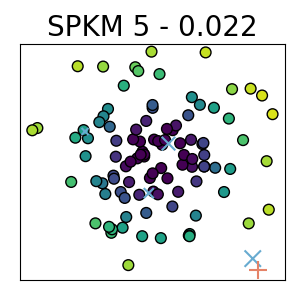}
    \includegraphics[width=0.24\linewidth]{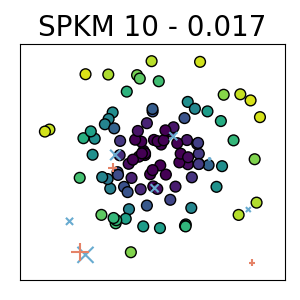}
    \caption{Regression -- performance is measured with mean squared error.}
    \label{fig:illustration_regr}
\end{subfigure}
\caption{
Classification (a) and regression (b) illustrations of \PKM{}. The datasets are shown in the leftmost images; training samples with diamonds and test samples with circles. The colour corresponds to the labels. Predictive models are identified in the figure titles, along with the performance measure, and for SPKM models we further display there the $R$ parameter. Basis vectors/support vectors are shown with crosses, whose size correlates with accompanying weights (values are not comparable between methods), "x" for negative, or "+" for positive vectors. All the methods within an experiment use the same RBF kernel.}
\label{fig:illustration}
\end{figure*}

Figure~\ref{fig:illustration} illustrates our \PKM{} method in classification and regression settings. 
In the classification setting (Figure~\ref{fig:illustration_cl}), the SVM algorithm determines 28 of the 50 training samples to be support vectors and obtains accuracy of 88\% of the test samples. 
Meanwhile, applying \PKM{} method with the cosine proximity loss and searching for only one suitable basis vector ("SPKM 1" in the title), we find an answer obtaining 90\% accuracy, with just one pass of the two-step procedure. 
Adding more basis vectors to \PKM{} algorithm gives also very good results with respect to accuracy.
It is easy to see from the illustration, that unlike SVM which uses the support vectors as boundary descriptions, our method can instead focus on finding a set of basis vectors that describe the data from the two classes as well as possible.
The regression setting is shown similarly in Figure~\ref{fig:illustration_regr}. While KRR performs the best, our \PKM{} can reach similar performance when we add suitably many basis vectors.

\begin{figure}[tb]
    \centering
    \includegraphics[width=0.13\linewidth]{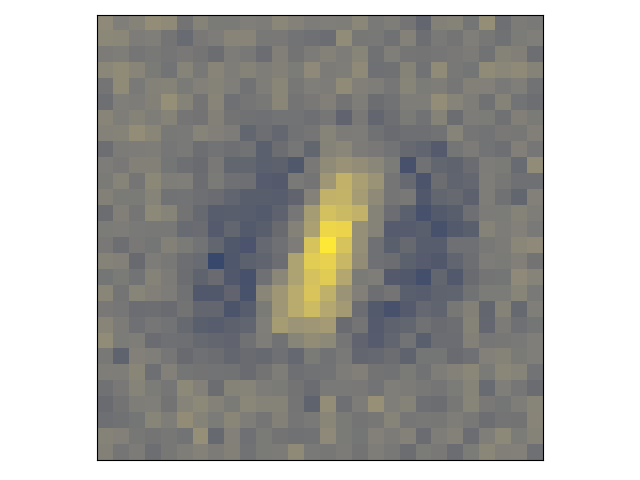}
    \includegraphics[width=0.13\linewidth]{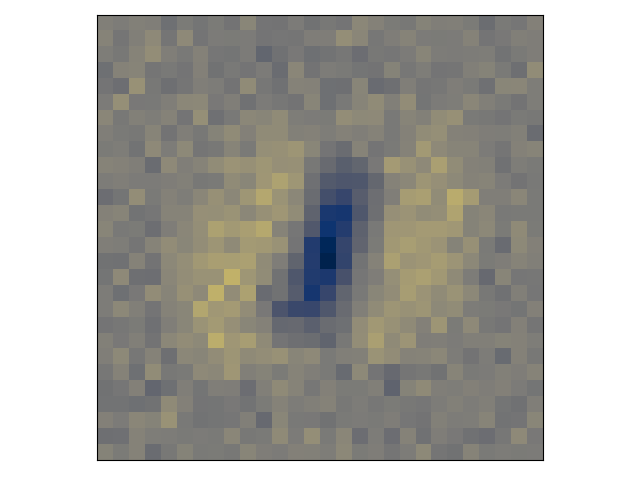}
    \includegraphics[width=0.13\linewidth]{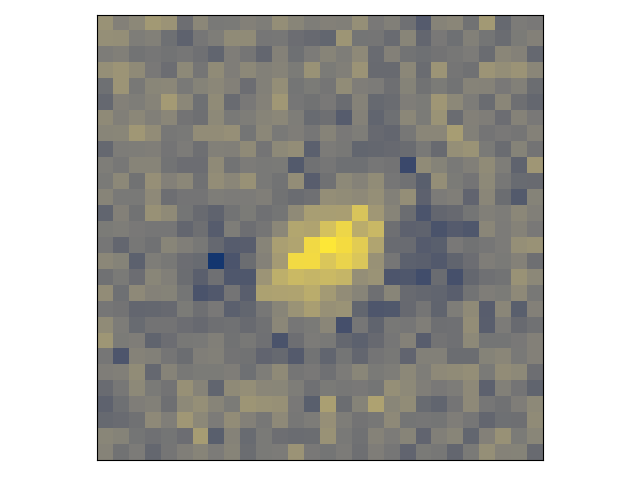}
    \includegraphics[width=0.13\linewidth]{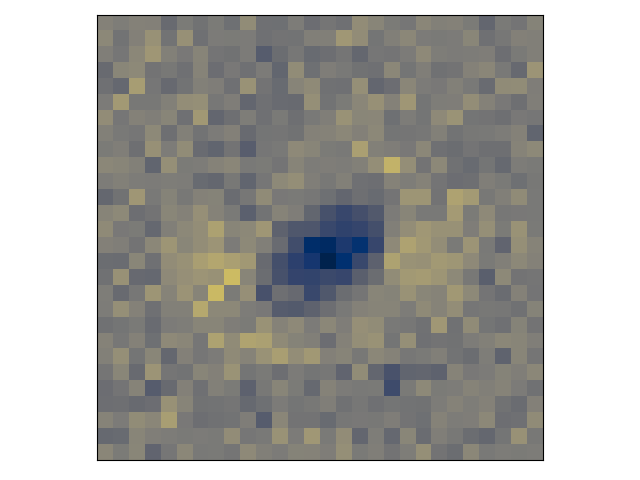}
    \includegraphics[width=0.13\linewidth]{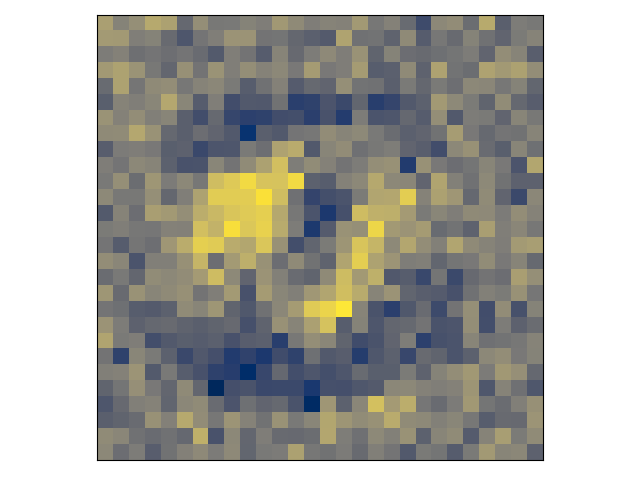}
    \includegraphics[width=0.13\linewidth]{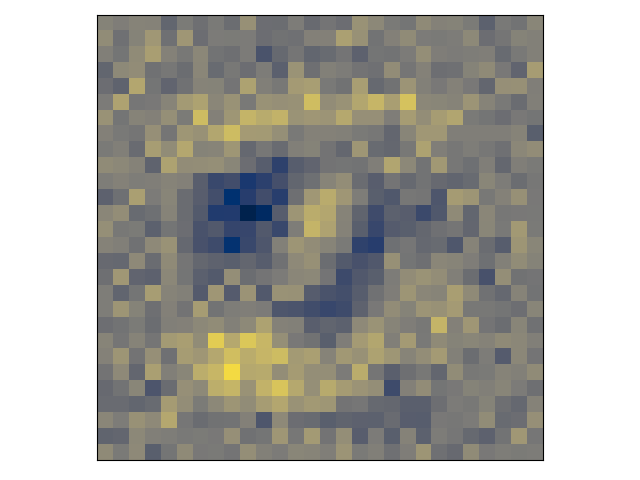}
    
    \caption{Some basis vectors found by \PKM{} applied to MNIST with OVO strategy. Leftmost two: 0 vs 1, middle two: 0 vs 8, rightmost two: 3 vs 9. }  
    \label{fig:mnist_examples}

\medskip 

    \centering
    \includegraphics[width=0.09\linewidth]{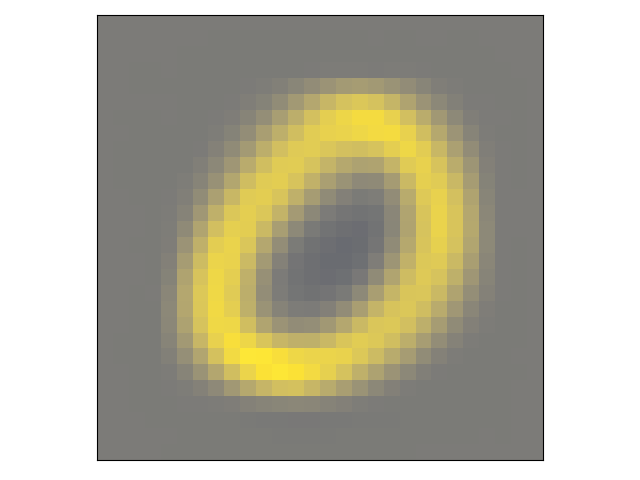}
    \includegraphics[width=0.09\linewidth]{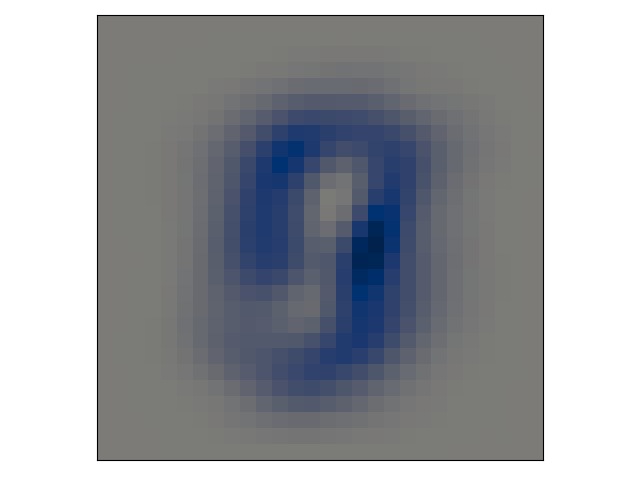}
    \includegraphics[width=0.09\linewidth]{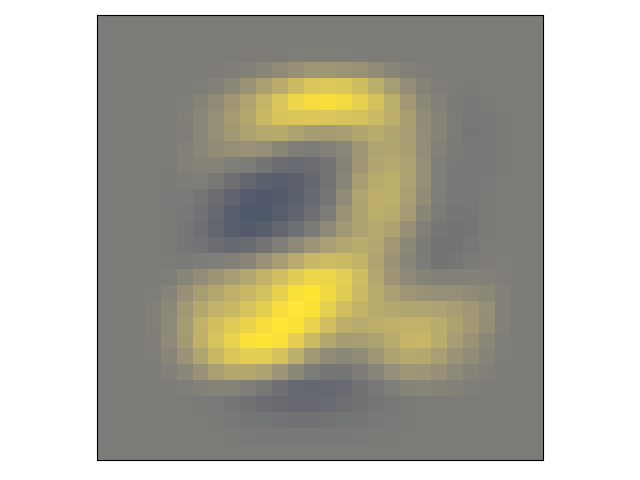}
    \includegraphics[width=0.09\linewidth]{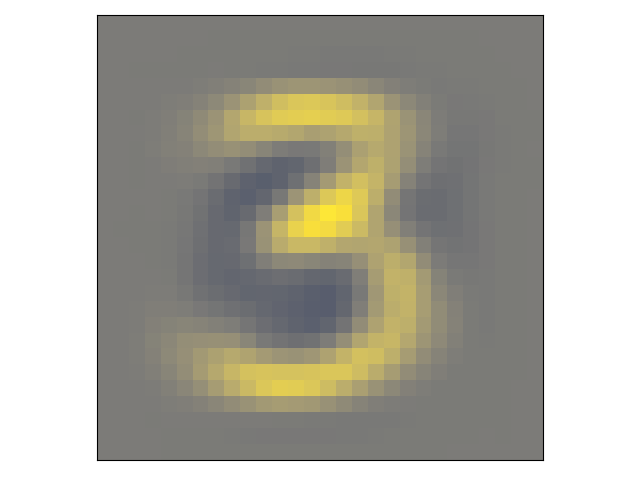}
    \includegraphics[width=0.09\linewidth]{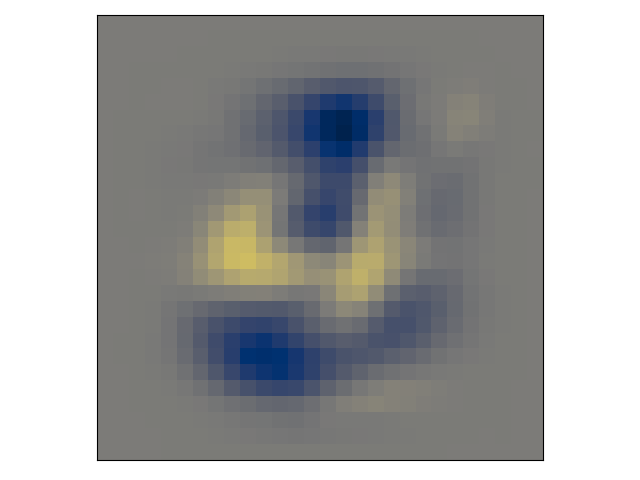}
    \includegraphics[width=0.09\linewidth]{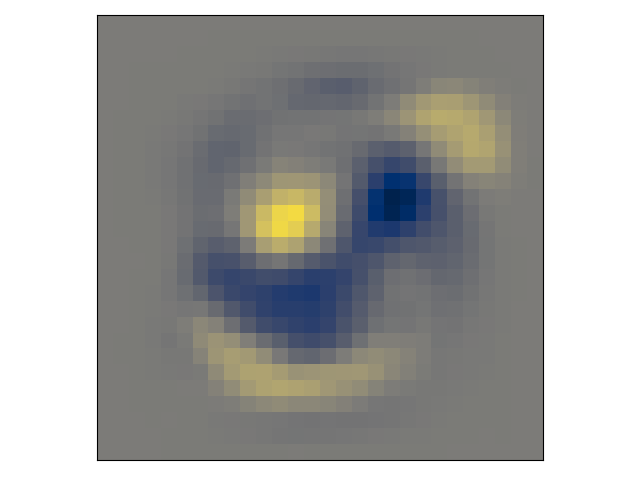}
    \includegraphics[width=0.09\linewidth]{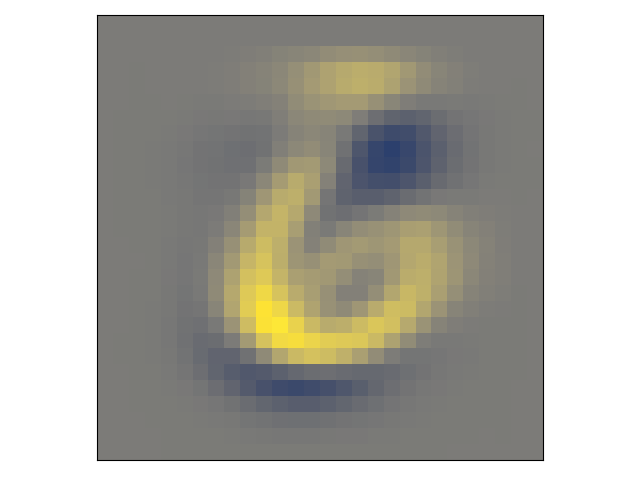}
    \includegraphics[width=0.09\linewidth]{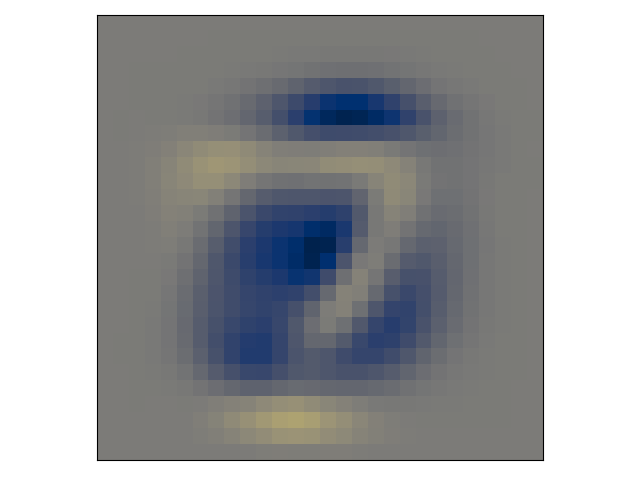}
    \includegraphics[width=0.09\linewidth]{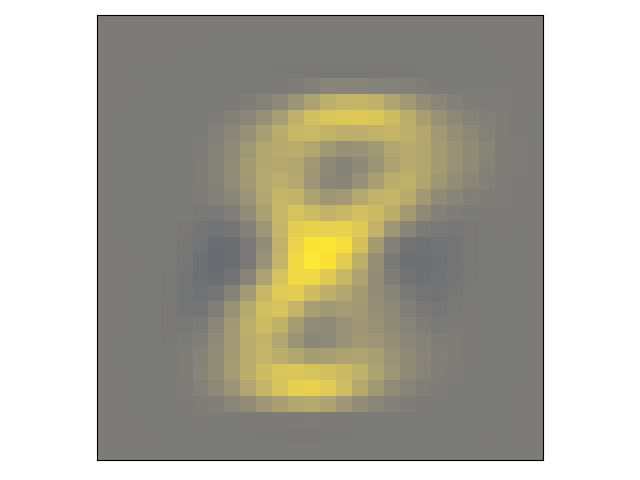}
    \includegraphics[width=0.09\linewidth]{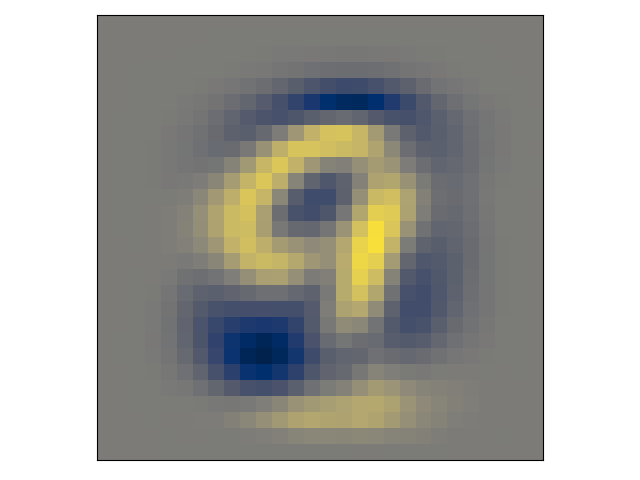}
    \caption{Basis vectors found by the multi-class \PKM{} algorithm applied to MNIST with ten basis vectors.}  
    \label{fig:mnist_mc_examples}

\end{figure}

We further illustrate our \PKM{} and the basis vectors it finds with the MNIST dataset.\footnote{\url{http://yann.lecun.com/exdb/mnist/} .} The dataset consists of handwritten digits, and has become a benchmark in many machine learning approaches.
We consider 6000 training and validation samples, and test on the separate 10000 samples. On this dataset, SVM using one-versus-one multiclass strategy, while obtaining accuracy of about 95\%, uses over 2000 support vectors to describe the solution.
We apply our \PKM{} algorithm with OVO and native multiclass strategies. In the first case, we search for only two basis vectors in each binary classifier (90 in total) using the cosine proximity loss, resulting in around 90\% accuracy, despite the extreme dual sparsity. Some examples of the obtained basis are displayed in Figure~\ref{fig:mnist_examples} (all of them are shown in supplementary material), showing distinguishing features between the two classes of the binary models. 
The basis vectors found by of \PKM{} using the native multiclass strategy with squared loss when limiting to 10 basis vectors are displayed in Figure~\ref{fig:mnist_mc_examples}. Despite the very low number of basis vectors, the model obtains around 80\% accuracy, and it is easy to interpret the basis vectors as prototypes for the ten classes.

\section{Experiments}\label{sec:exp}

In this section we experimentally evaluate our proposed Sparse Pre-image Kernel Machine (\PKM{}) algorithm, investigating its performance compared to traditional approaches while focusing on the primal and dual sparsity properties it promotes. 
\footnote{The Python implementation of our proposed \PKM{} can be found in \url{https://github.com/RiikkaHuu/SPKM}.}

We compare our \PKM{} 
to various other methods. 
First of all, we consider kernel ridge regression (KRR) and support vector machines (SVM), 
and in some experiments we consider also Nyström approximation with these methods.
In addition to these, in classification setting we also consider the separable case approximation (SCA) method~\citep{geebelen2012reducing} for reducing the number of support vectors from SVM solution. As a comparison to prototype-based methods, we consider the MMD-critic, a method based on maximum mean discrepancy, introduced in~\citet{kim2016examples} in classification setting. To compare the results with primal sparsity, we consider the linear Lasso model. 

In our experiments we divide the datasets in thirds for training, validation and testing. 
For the competing approaches we cross-validate over the relevant parameters (regularization parameter $\lambda$ for KRR and Lasso, $C$ for SVM and SCA) in range [1e-5,1]. In our proposed \PKM{} method, we cross-validate over the regularization parameter for norm of $\mathbf{c}$, $\lambda$, with these parameters.
For the sparsity inducing regularisation method, we consider the $l_1$-ball projection~\citep{duchi2008efficient}, considering a large range of values (typically [1e-1-1e3]). 
With our \PKM{} we consider five random initialisations for $\mathbf{U}$ and run the algorithm once in classification setting, and five times in regression (maximum number of iterations in Algorithm~\ref{algo:preimg}). Unless otherwise stated, in classification tasks we use the cosine proximity loss, and in regression the squared loss. 
In all the experiments, unless otherwise stated, we use the RBF kernel with parameter $\sigma$ set as the mean of the distances in the training set. As a technical detail, in our SPKM algorithm with RBF kernel we linearly transform the values of the kernel from range $[0, 1]$ to $[-1, 1]$.

% ===========================================

\subsection{Experiments in classification and regression}

\begin{table}
\resizebox{\columnwidth}{!}{%
\begin{tabular}{lc clclclclcl}
\toprule
 &  \multicolumn{2}{c}{BCW} & \multicolumn{2}{c}{Diab.} & \multicolumn{2}{c}{Ionos.} & \multicolumn{2}{c}{QSAR}\\ 
 & acc & b/sv & acc & b/sv & acc & b/sv & acc & b/sv \\
\midrule 

% the contents of the table are created with a python script!
\multirow{3}{*}{SPKM}   & \cellcolor[gray]{0.9}  97.8$\pm$0.7 &  1 & \cellcolor[gray]{0.9}  89.1$\pm$0.9 &  1 & \cellcolor[gray]{0.9}  86.6$\pm$5.0 &  1 & \cellcolor[gray]{0.9} \textit{ 65.6$\pm$2.1} &  1\\
  & \cellcolor[gray]{0.9}  97.8$\pm$0.7 &  2 & \cellcolor[gray]{0.9}  87.4$\pm$2.6 &  2 & \cellcolor[gray]{0.9}  88.6$\pm$1.6 &  2 & \cellcolor[gray]{0.9} \textit{ 65.6$\pm$2.1} &  2\\
  & \cellcolor[gray]{0.9}  98.1$\pm$0.4 &  5 & \cellcolor[gray]{0.9}  87.5$\pm$1.2 &  5 & \cellcolor[gray]{0.9}  88.9$\pm$6.1 &  5 & \cellcolor[gray]{0.9}  83.0$\pm$0.7 &  5\\
\midrule 
KRR   & \cellcolor[gray]{0.9}  98.0$\pm$0.5 & 227 & \cellcolor[gray]{0.9}  94.1$\pm$0.5 & 173 & \cellcolor[gray]{0.9}  89.5$\pm$1.8 & 117 & \cellcolor[gray]{0.9}  85.0$\pm$2.6 & 351\\
\midrule 
SVM   & \cellcolor[gray]{0.9}  98.1$\pm$0.4 & 67 & \cellcolor[gray]{0.9}  91.2$\pm$0.7 & 98 & \cellcolor[gray]{0.9}  90.6$\pm$0.7 & 65 & \cellcolor[gray]{0.9}  83.9$\pm$1.7 & 187\\
\midrule 
SCA  & \cellcolor[gray]{0.9}  98.0$\pm$0.5 & 58 & \cellcolor[gray]{0.9}  89.7$\pm$1.2 & 68 & \cellcolor[gray]{0.9}  89.7$\pm$0.7 & 60 & \cellcolor[gray]{0.9}  79.6$\pm$2.2 & 123\\
\midrule 
\multirow{2}{*}{MMD}  & \cellcolor[gray]{0.9}  93.9$\pm$7.5 &  5 & \cellcolor[gray]{0.9}  77.2$\pm$6.3 &  5 & \cellcolor[gray]{0.9}  78.6$\pm$2.5 &  5 & \cellcolor[gray]{0.9} \textit{ 61.2$\pm$10.7} &  5\\
  & \cellcolor[gray]{0.9}  - & - & \cellcolor[gray]{0.9}  78.7$\pm$5.8 & 10 & \cellcolor[gray]{0.9}  76.9$\pm$5.5 & 10 & \cellcolor[gray]{0.9} \textit{ 64.6$\pm$6.6} & 10\\
\midrule 
Lasso  & \cellcolor[gray]{0.9}  96.9$\pm$1.1 & - & \cellcolor[gray]{0.9}  89.5$\pm$1.1 & - & \cellcolor[gray]{0.9}  83.8$\pm$2.5 & - & \cellcolor[gray]{0.9}  83.8$\pm$1.5 & -\\

\bottomrule
\end{tabular}
}
\caption{Classification results w.r.t accuracy and dual sparsity: "b/sv" stands for the average number of "basis or support vectors" found/used: note that for SPKM and MMD these values need to be precised in advance. Accuracy score in italics indicates performance close to majority vote. MMD for the breast cancer Wisconsin dataset with 10 prototypes resulted in error. 
}\label{tb:basic_cl}
\end{table}

We first investigate SPKM in the simple classification and regression settings, comparing it to relevant baselines. We further consider our method as a prototype method, and investigate the goodness of the prototypes by applying them in Nyström approximation. 

\paragraph{Classification} In classification task, we consider four datasets for binary classification: the breast cancer Wisconsin, early stage diabetes risk prediction, Ionosphere and QSAR biodegradation datasets\footnote{\bcwurl, \url{https://archive.ics.uci.edu/ml/datasets/Early+stage+diabetes+risk+prediction+dataset}, \url{https://archive.ics.uci.edu/ml/datasets/Ionosphere} and \url{https://archive.ics.uci.edu/ml/datasets/QSAR+biodegradation} .}
for which the results have been displayed in Table~\ref{tb:basic_cl}.
We report for our \PKM{} results up to $R=5$. For most of the datasets SPKM 
even with one or two elements works as well as KRR or SVM with many more elements/support vectors. As an exception, the QSAR biodegradation dataset is more difficult, and we need five elements to capture the relevant information required for classification - an amount that is still much lesser than what KRR or SVM require, or what the reduced amount of support vectors the SCA method produced is. Overall the performance even with five basis vectors is very close to the performance obtained by KRR or SVM. The compared prototype method, MMD, in three out of four cases obtains worse performance than our SPKM, even with double the number of prototypes.

As mentioned, we also investigate the suitability of our method as a general way to select meaningful basis elements by using them in Nyström approximation. The Nyström method approximates the kernel matrix by selecting a few landmark elements to which all the data samples are compared to. Often this can be done by randomly sampling the dataset, or using some clustering techniques such as k-means. Here we compare the SVM and KRR performances when they are given a Nyström approxiamted kernel matrices, in which the landmarks for the approximation are chosen either with k-means or our SPKM. We note that this is done mostly in order to show the relevance of our method, since SPKM already learns to classify the data and building an SVM or KRR model on top of it is in that sense redundant.

The results for the various datasets are shown in Figure~\ref{fig:basic_cl_nystrom}, in the same experimental setting as before. It can be seen that using the SPKM-selected basis vectors as landmarks in Nyström method yields higher accuracies than using as many elements found by k-means algorithm whenever the number of the landmarks is very restricted., especially with KRR.

\begin{table}
\small
\begin{center}
\begin{tabular}{lc clclcl}
\toprule
 & \multicolumn{2}{c}{Diab.} & \multicolumn{2}{c}{Ionos.}\\ 
 & acc & b/sv & acc & b/sv \\
\midrule

\multirow{3}{*}{SPKM}   & \cellcolor[gray]{0.9}  85.8$\pm$1.6 &  1 & \cellcolor[gray]{0.9}  76.1$\pm$4.6 &  1\\
  & \cellcolor[gray]{0.9}  87.4$\pm$2.9 &  2 & \cellcolor[gray]{0.9}  78.3$\pm$1.8 &  2\\
  & \cellcolor[gray]{0.9}  87.9$\pm$2.5 &  5 & \cellcolor[gray]{0.9}  86.3$\pm$3.7 &  5\\
\midrule 
KRR   & \cellcolor[gray]{0.9}  92.1$\pm$0.5 & 173 & \cellcolor[gray]{0.9}  83.5$\pm$1.1 & 117\\
\midrule 
SVM   & \cellcolor[gray]{0.9}  95.4$\pm$1.2 & 74 & \cellcolor[gray]{0.9}  88.0$\pm$0.7 & 41\\

\bottomrule
\end{tabular}
\end{center}
%}
\caption{Classification results w.r.t accuracy and dual sparsity when using polynomial kernels of degree 3. "b/sv" stands for the average number of "basis or support vectors" found/used.
}\label{tb:basic_cl_poly3_small}
\end{table}

\paragraph{Classification - polynomial kernel} In most of our experiments we focus in the RBF kernels, since they are extremely popular. In order to highlight the versatility of our method we here briefly consider also the polynomial kernel. 

Focusing on the previous classification experiments, we re-run the Diabetes and Ionosphere experiments with polynomial kernel of degree 3. The performance (see Table~\ref{tb:basic_cl_poly3_small}) is overall very similar to the experiments using RBF kernel; with these datasets both of the kernels work equally well. SPKM can be seen to adapt as easily to polynomial kernels as to RBF kernels.

\begin{figure}[tb]
    \centering
    \includegraphics[width=0.38\linewidth]{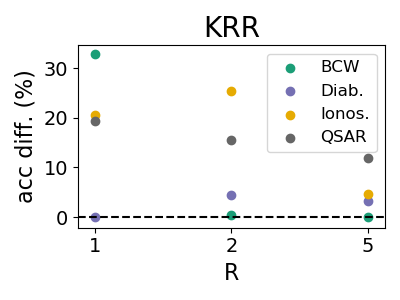}
    \includegraphics[width=0.38\linewidth]{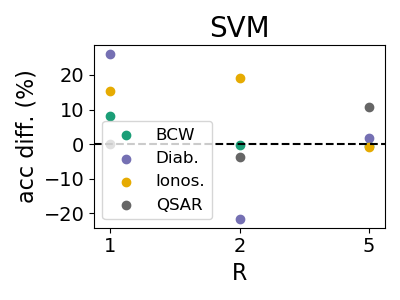} 
    \caption{Comparison on using SPKM and k-means as methods for selecting the landmarks for Nyström approximation with KRR (left) and SVM (right) in the classification experiments. The plots display the differences in average accuracies obtained by the methods: "SPKM-KRR", meaning that values above 0 show superior performance of SPKM-based landmarks.
    }
    \label{fig:basic_cl_nystrom}
\end{figure}

\paragraph{Regression} Similarly, we investigate the performance of our algorithm in the regression setting, with various datasets: the Parkinson's telemonitoring, Boston housing prices, and Sarcos\footnote{\url{https://archive.ics.uci.edu/ml/datasets/Parkinsons+Telemonitoring}, \url{https://archive.ics.uci.edu/ml/machine-learning-databases/housing/}, and \url{http://www.gaussianprocess.org/gpml/data/} .}, of which for the last we consider the first target value. The results are reported in Figure~\ref{fig:basic_regr}, where we show the performance of our proposed \PKM{} as a function of number of basis elements ($R$). Our SPKM converges towards the KRR solution when $R$ increases. Also, using SPKM performs always better than using KRR with same number of landmarks selected by k-means. In two out of three cases also KRR with Nyström approximation obtained with SPKM landmarks performs better than traditional Nyström approximation. In the third case it is possible that normalisation done during SPKM procedure makes the basis vectors not applicable with KRR.

We further investigated the scalability of our approach with the Sarcos dataset. The results can be seen in Figure~\ref{fig:time}. As our algorithm uses more basis vectors, the running times naturally increase. However the scalability of SPKM is much better with respect to the number of samples than KRR, as KRR needs to invert the $n\times n$ kernel matrix.

\begin{figure}
\begin{center}
\includegraphics[width=0.325\linewidth]{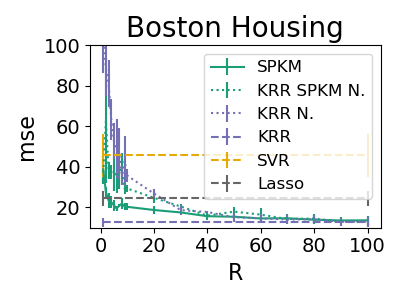}
\includegraphics[width=0.325\linewidth]{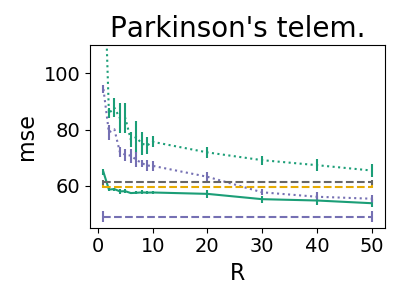}
\includegraphics[width=0.325\linewidth]{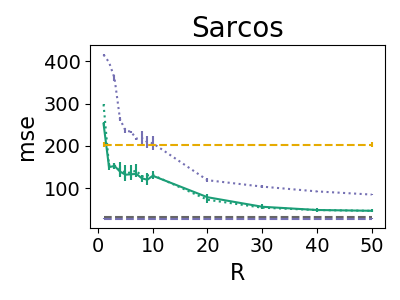}
\end{center}
\caption{Results (mean squared error) on various datasets for regression task, as functions of number of elements in the SPKM algorithm. The compared algorithms (KRR; SVR and Lasso) are plotted as horizontal lines, as they do not depend on the $R$ parameter.   }\label{fig:basic_regr}
\end{figure}

\begin{figure}
    \centering
    \includegraphics[width=0.4\linewidth]{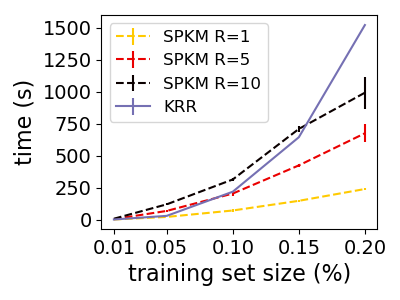}
    \caption{Running time comparison (in seconds) of SPKM and KRR on the Sarcos dataset, with varying amount of data used in training the model. Comparison is also done to the time needed for calculating the full kernel matrix for KRR.
    }
    \label{fig:time}
\end{figure}

\subsection{Primal sparsity}

In this section we investigate the effect of regularizing the primal sparsity in our SPKM algorithm. 
We first consider a simulated dataset: we create low-dimensional data and formulate a binary classification problem with various levels of non-linearity from it. However to the learning algorithms we provide data vectors in which 90\% of the features are just noise, and 10\% make up the original data. Thus the dataset should favour methods that can filter out the unnecessary features. We compare our SPKM (R=2) to Lasso, and report the results in Figure~\ref{fig:cl_sim_sparsity}, alongside with baseline SVM results. The polynomial relations of degree 5 highlight the relevance of our method: it clearly outperforms Lasso in finding relevant features from data, and even outperforms SVM. Notably, it reaches the peak around 80-90\% level of primal sparsity, almost exactly at the amount of meaningful features in data creation~(10\%).

\begin{figure}
\begin{center}
\includegraphics[width=0.32\linewidth]{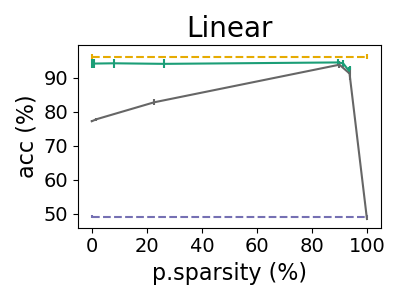}
\includegraphics[width=0.32\linewidth]{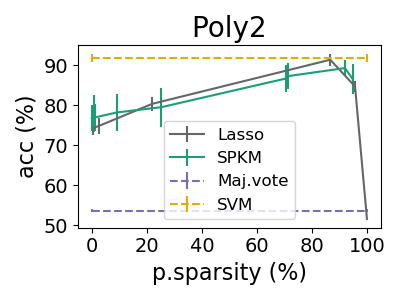}
\includegraphics[width=0.32\linewidth]{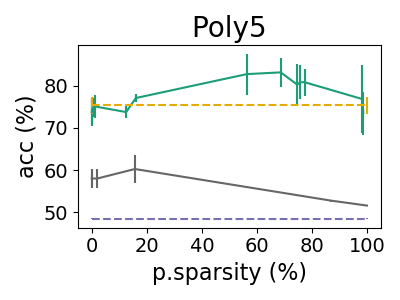}
\end{center}
\caption{Accuracies as functions of primal sparsity for simulated sparse datasets; SVM results do not depend on the primal sparsity and are shown as constants w.r.t sparsity level. The plots display the results on different relations in the data: linear or polynomial of degree 2 or 5. 
}\label{fig:cl_sim_sparsity}
\end{figure}

\begin{figure}
\begin{center}
\includegraphics[width=0.4\linewidth]{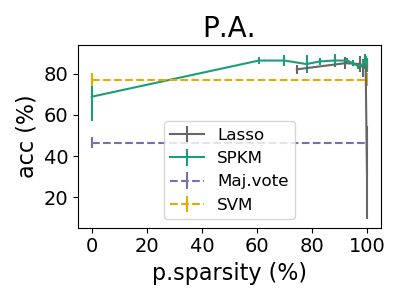}
\includegraphics[width=0.4\linewidth]{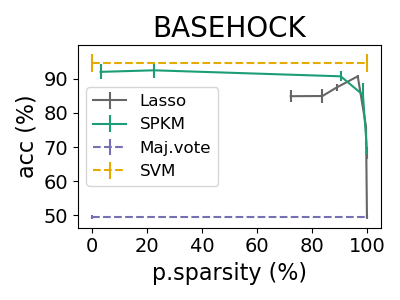} \\
\includegraphics[width=0.4\linewidth]{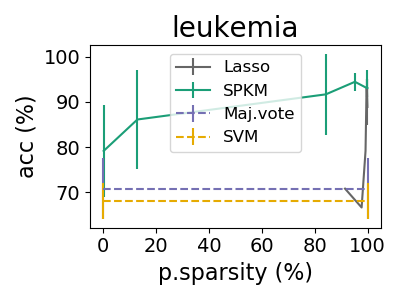} 
\includegraphics[width=0.4\linewidth]{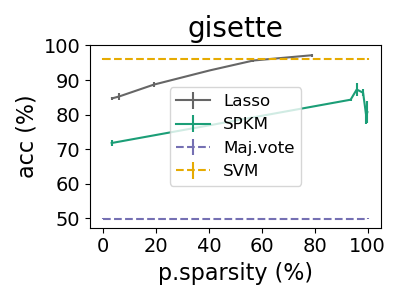} 
\end{center}
\caption{Accuracies as functions of primal sparsity for simulated sparse datasets; SVM results do not depend on the primal sparsity and are shown as constants w.r.t sparsity level.  
}\label{fig:cl_sparsity}
\end{figure}

In addition to the study with simulated data, we consider real datasets: three datasets (BASEHOCK ($n=1993, d=4862$), Leukemia ($n=72, d=7070$) and Gisette ($n=7000, d=5000$)) from the  ASU Feature Selection Repository~\citep{zhao2010advancing}\footnote{\url{https://jundongl.github.io/scikit-feature/datasets.html}},
and a dataset introduced in~\citet{khaledi2020predicting}, on antimicrobial resistance in \textit{Pseudomonas aeruginosa} 
("PA" dataset).\footnote{\paurl}
With this PA dataset we focus on the Tobramycin antibiotic, %\todo{ target 0, Tobramycin}, 
and use the gene expression view (dimensionality: 6026) 
to predict its effect.
We filter out intermediate responses to the antibiotics, being left with 260 data samples in two classes.

The results are presented in Figure~\ref{fig:cl_sparsity} again as accuracies as functions of primal sparsity level. It can be seen that with PA dataset and Basehock our method is competitive with SVM and Lasso, an unlike SVM it finds sparse representations. In the smaller Leukemia dataset Lasso can manage to find a good solution that is extremely sparse, while SPKM finds good solutions for various levels of sparsity, outpefroming also the SVM baseline. On the other hand with the large gisette dataset it is clear that using only two basis vectors in SPKM is not enough to well represent the data.

\subsection{MKL extension}

\begin{table}
\caption{The results of the MKL experiments: accuracies of the various methods for different view combinations. Eeach column represents a view combination, where the top part of the table shows which views are combined by indicating it with a coloured bacground, while "-" marks absence of the view. The numbers in the coloured cells show how many of the two elements the SPKM ends up using (i.e., the corresponding $c_i$ is non-zero) on average from that view. }\label{tb:mvdigits_mkl}
\begin{subfigure}{\textwidth}
\caption{Results on multiple digits dataset.}\label{tb:uwave_mkl}
\begin{center}
\resizebox{\columnwidth}{!}{%
\begin{tabular}{ccccccccccccc}
\toprule
fou& \cellcolor[gray]{0.9} 2& -& -& -& \cellcolor[gray]{0.9} 2& \cellcolor[gray]{0.9} 1\\
zer& -& \cellcolor[gray]{0.9} 1& -& -& \cellcolor[gray]{0.9} 1& \cellcolor[gray]{0.9} 1\\
mor& -& -& \cellcolor[gray]{0.9} 1& -& \cellcolor[gray]{0.9} 1& \cellcolor[gray]{0.9} 1\\
kar& -& -& -& \cellcolor[gray]{0.9} 2& -& \cellcolor[gray]{0.9} 2\\
\midrule
SPKM& 47.00$\pm$1.4 & 47.00$\pm$1.4 & 54.67$\pm$0.9 & 99.00$\pm$0.0 & 92.33$\pm$2.5 & 98.67$\pm$1.2 \\
KRR& 96.33$\pm$0.5 & 98.00$\pm$0.8 & 96.67$\pm$2.4 & 99.00$\pm$0.0 & 99.00$\pm$0.8 & 99.33$\pm$0.5 \\
SVM& 95.67$\pm$0.9 & 96.33$\pm$0.9 & 93.00$\pm$2.4 & 99.67$\pm$0.5 & 97.67$\pm$2.6 & 99.33$\pm$0.5 \\
Lasso& 94.33$\pm$0.9 & 97.67$\pm$0.9 & 98.33$\pm$0.5 & 97.33$\pm$1.2 & 99.00$\pm$0.8 & 98.33$\pm$0.5 \\

\bottomrule
\end{tabular}    
}
\end{center}
\end{subfigure}

\bigskip

\begin{subfigure}{\textwidth}
\caption{Results on uWaveGesture dataset}\label{tb:uwave_mkl}
\begin{center}
\begin{tabular}{cccccccccc}
\toprule
X& \cellcolor[gray]{0.9} 1& -& -& \cellcolor[gray]{0.9} 1\\
Y& -& \cellcolor[gray]{0.9} 2& -& \cellcolor[gray]{0.9} 2\\
Z& -& -& \cellcolor[gray]{0.9} 2& \cellcolor[gray]{0.9} 1\\
\midrule
SPKM& 85.07$\pm$2.1 & 73.52$\pm$7.1 & 77.46$\pm$1.5 & 93.24$\pm$1.4 \\
KRR& 89.58$\pm$3.2 & 86.76$\pm$1.4 & 84.79$\pm$1.6 & 96.62$\pm$1.4 \\
SVM& 88.73$\pm$2.4 & 78.03$\pm$6.2 & 81.97$\pm$2.7 & 95.49$\pm$2.9 \\
Lasso& 80.56$\pm$3.3 & 77.46$\pm$3.8 & 76.06$\pm$5.1 & 90.99$\pm$1.4 \\
\bottomrule
\end{tabular}
\end{center}
\end{subfigure}

\end{table}

To highlight the capabilities of the MKL-extension of our algorithm, we consider the multi-view digits\footnote{\url{https://archive.ics.uci.edu/ml/datasets/Multiple+Features}} and uWaveGesture\footnote{\url{http://www.timeseriesclassification.com/description.php?Dataset=UWaveGestureLibraryAll}} datasets, and train our SPKM algorithm on various combinations of the four and three views, respectively. For the digits dataset we classify "4" and "9", and with the uWaveGesture we consider the classes 2 and 7. 
In all of the views we use the RBF kernels with view-specific $\gamma$ parameter, and for SPKM consider $R_j=2$ using the squared loss and regularizing with $\|c\|_1$. 
In KRR and SVM baselines we consider uniformly weighted combination of the kernels of individual views, while with Lasso we directly concatenate all the data. 

Table~\ref{tb:mvdigits_mkl} displays the results, showing the obtained accuracies with various combinations, as well as how many elements SPKM finds relevant from the views (out maximum $R_j=2$).
We can see that using l1-norm regularization on the multipliers $c_r$, the MKL extension of SPKM can easily indicate the unnecessary views (see e.g. in digits dataset the full combination where the perfect view "kar" is used). Moreover, like traditional MKL, SPKM-MKL is able to combine favourably views that alone are not enough to solve the learning problem, as an example of this is the combination of views "fou", "zer" and "mor" in the digits dataset.

\section{Connections to other works}\label{sec:related}

The idea of learning interpretable models through kernel pre-images has been investigated in~\citet{uurtio2019large} in the context of kernel canonical correlation analysis (KCCA). They focused in the specific task of iteratively finding interpretable CCA projection directions with deflation to maximize the amount of explained cross-covariance. Here, but making use of the kernel pre-images,  we propose a generic supervised learning framework that can assume multiple alternative loss functions and regularizers.

Elsewhere in kernel methods, some researchers have focused on a related task of reducing the number of support vectors in SVM learning. However, in contrast to our work,  many of them~\citep{downs2001exact,geebelen2012reducing,panja2018ms,wang2020condensing} choose the support vectors from the available training samples. An approach closer to our work is given by the so-called reduced set methods~\citep{blachnik2011simplifying,scholkopf2001generalized, burges1996simplified, joachims2009sparse}, where after obtaining the SVM solution, in a post-processing step a new, simpler, hyperplane is searched for such that it still approximates the original solution well. The main point of these methods is, that the new hyperplane might not be characterised with the original data samples, but rather with samples found for example with clustering approaches, or some other scheme such as with eigenvalue decomposition. Alternatively,  one may in a preprocessing step optimise the original data representation with for example clustering approaches, after which ordinary SVM can be trained \citep{neto2013opposite}.  % check the papers they cite
Contrary to the above approaches, in our work we propose to directly find a an optimal set of basis vectors for a given supervised learning task. Additionally, our method obtains both dual and primal sparsity, which gives a better basis for interpreting the models than dual sparsity alone.

As our method finds a few basis elements from which the decision model is built, a natural comparison is the Nyström method~\citep{williams2001using,drineas2005nystrom}. The Nyström method approximates a kernel matrix by selecting a few landmark points, either randomly from the data samples or for example with some clustering method, such as k-means. The full kernel matrix is then approximated so, that the data samples are only compared to the landmark elements instead to all other data samples. This landmark comparison resembles our approach; however there are some major differences. First of all Nyström approximation is obtained without knowledge of the subsequent learning task, contrary to our work where we in a sense learn the kernel jointly with the learning problem. Secondly, the decision function of kernel-based methods does not change even if some kernel approximation scheme, like Nyström method, is applied. This means that even if the kernel approximation only depends on the few samples, the decision function itself still depends on all the training data samples, contrary to our case when the decision function only depends on the found basis elements.

The kernel approximation methods are motivated from the desire to scale up the kernel methods. In this area of research, recently new methods have been proposed~\citep{rudi2017falkon,meanti2020kernel}.
While the complexity of our gradient update $\mathcal{O}(dnR)$ improves on the scaling of the traditional kernel methods ($\mathcal{O}(n^3)$ due to inverting the non-approximated kernel matrix), we wish to point out that the focus of our paper has not been same than these papers. \citep{rudi2017falkon} considers speeding up gradient-based optimisation by using kernel approximations and preconditioning which we have not adopted for simpler and more fair comparisons to the baselines, nor have we optimised our code to work with GPUs like~\citep{meanti2020kernel}. Both of these directions could be applied on top of our method, however such approaches are not the focus of this work.

A connection can also be made to RBF networks, which are neural networks whose activation functions are radial basis functions~\citep{broomhead1988radial}. Considering three layers (input and output with one middle layer with non-linearity) and a Gaussian function for the radial function, the output of the network given an input vector $\mathbf{x}$ is of form \[\psi(\mathbf{x})=\sum_{i=1}^L a_i \exp(-\gamma_i \|\mathbf{x}-\mathbf{c}_i\|^2) , \] in which $L$ is the number of nodes in the hidden layer, $a_i$ are weights to combine the output with and $\mathbf{c}_i$ are so-called centroids associated with the nodes of the inner layer. 
The RBF networks have also been extensively studied theoretically~\citep{krzyzak1998radial,lei2014generalization,niyogi1996relationship}.
While the RBF network model greatly resembles our model, it is only so if we choose to use the RBF kernel function. In our framework it would be equally easy to use for example the polynomial kernel, including the linear kernel as a special case. Moreover with our MKL extension we can handle data coming from several views, with possibly different kernels on them. 
While a backpropagation approach exists for RBF networks, it is common when working with them to just subsample the training data or use some unsupervised learning method in choosing the centroids~\citep{schwenker2001three}. In our SPKM approach the basis vectors $\mathbf{u}_i$ are optimised in a supervised manner.

Finally, our optimisation approach can also be seen to bear some resemblance to matching pursuit approaches~\citep{vincent2002kernel,pati1993orthogonal,davis1994adaptive}, as we are trying to find also the elements themselves in addition to their multipliers. Yet contrary to orthogonal matching pursuit (OMP) we do not consider a dictionary, but the whole input space from which we search our elements. Also, while it could be possible to grow the set of elements in OMP style, we learn all of them at the same time. While OMP has been extended to continuous setting~\citep{keriven2018sketching,keriven2017compressive}, the constraints on the optimisation problem are different from ours.

\section{Conclusion}\label{sec:concl}

In this work we have proposed a novel kernel-based learning method, that enjoys both dual and primal sparsity. The proposed \PKM{} method is scalable with respect to the number of data samples that is usually the bottleneck of kernel methods: the computational complexity of $\mathcal{O}(dnR)$ (with $R\ll n$) instead of the usual $\mathcal{O}(n^3)$. The method is easily adaptable to any differentiable loss function, such as log loss or squared hinge loss. 
Furthermore, our Rademacher bound gives guarantees of the generalisation ability of our \PKM{}.

As with reproducing kernels the input space $\mathcal{X}$ is more arbitrary as our restriction to $\mathbb{R}^d$, in the future it would be interesting to investigate whether a feasible approach for solving the pre-image problem also for non-vectorial data could be devised; perhaps following on the footprints of orthogonal matching pursuit and other dictionary-based approaches.

\paragraph{Acknowledgements}
The authors acknowledge the computational resources provided by the Aalto Science-IT project.
This work has been in part funded Academy of Finland grants 310107 (MACOME) and 334790 (MAGITICS)

\bibliographystyle{spbasic}    

\bibliography{mybib}

\FloatBarrier
\clearpage

\appendix

% ==========================================================================

 \section{SPKM derivatives}

\subsection{Kernel functions}

\paragraph{RBF kernels} The RBF kernel can be defined as follows:
 \[k_{RBF} (\x,\mathbf{z}) = exp\left( \frac{2\x^\top \mathbf{z} - \mathbf{z}^\top \mathbf{z} - \x^\top \x}{2\sigma^2}\right)\]
Straightforward calculation for derivative gives \[\frac{d}{d \uu} k_{RBF} (\mathbf{x},\mathbf{u}) = \frac{1}{\sigma^2} k(\mathbf{x},\mathbf{u}) \left(\mathbf{u}- \mathbf{x}\right)\]

\subsection{Loss functions}
 
\paragraph{Cosine proximity loss}
The cosine proximity loss with the SPKM model can be written as
\[-\frac{y^\top k(\mathbf{X},\mathbf{U})\mathbf{c}}{\|k(\mathbf{X},\mathbf{U})\mathbf{c}\|},\]
in which $k(\mathbf{X},\mathbf{U})$ is shorthand for the $n\times R$ kernel matrix containing elements $k(\mathbf{x}_i, \mathbf{u}_r)$.

The derivative of this expression can be calculated with the help of quotient rule, and thus the problem reduces in finding the derivatives for numerator and denominator. 

% upstairs = numerator
% downstairs = denominator

\begin{itemize}
    \item Derivative of the numerator can be calculated by first noting that $\mathbf{y}^\top k(\mathbf{X},\mathbf{U})\mathbf{c} = \sum_{i=1}^R \mathbf{y}^\top k(\mathbf{X},\mathbf{u}_i)c_i = \sum_{j=1}^n\sum_{i=1}^R y_jc_ik(\mathbf{x}_j, \mathbf{u}_i).$ We obtain for $u_i$:\[\frac{d}{du_i}\sum_{j=1}^n\sum_{i=1}^R y_jc_ik(\mathbf{x}_j, \mathbf{u}_i) = \sum_{j=1}^n c_iy_j \frac{d}{d \mathbf{u}_i} k(\mathbf{x}_j, \mathbf{u}_i).\]
Here we have $y_j$ multiplying vectors $\frac{d}{d \mathbf{u}_i} k(\mathbf{x}_j, \mathbf{u}_i)c_i$, which are then summed: $\mathbbm{1}^\top diag(\mathbf{y})[D k(\mathbf{X},\mathbf{u}_i)]$, in which $D k(\mathbf{X},\mathbf{u}_i)$ denotes matrix with $\frac{d}{d \mathbf{u}_i} k(\mathbf{x}_j,\mathbf{u}_i)$ on rows.

\item The denominator can be written as $\sqrt{\mathbf{c}^\top k(\mathbf{U},\mathbf{X}) k(\mathbf{X},\mathbf{U})\mathbf{c}}$. The square root can be dealt with the chain rule, so we are left with considering $\mathbf{c}^\top k(\mathbf{U},\mathbf{X}) k(\mathbf{X},\mathbf{U})\mathbf{c}$.
To write this w.r.t $u_i$, let's first consider that $ [k(\mathbf{U},\mathbf{X}) k(\mathbf{X},\mathbf{U})]_{i,j} = \sum_{l} k(\uu_i,\x_l)k(\x_l,\uu_j)$. Now also $\mathbf{c}^\top \mathbf{Ac} = \sum_{i,j}c_i\mathbf{A}_{ij}c_j$, and thus we get $\sum_{l=1}^n\sum_{i,j=1}^R c_i \sum_{l} k(\uu_i,\x_l)k(\x_l,\uu_j) c_j$.

The derivative of that depends on if $i=j$ or not:
\begin{itemize}
\item $i=j$ : $c_ic_j2k(\x_l,\uu_i)\frac{d}{d\uu_i}k(\x_l, \uu_i)$ 
\item otherwise : $c_ic_jk(\x_l,\uu_j)\frac{d}{d\uu_i}k(\x_l, \uu_i)$
\end{itemize}
In the final derivative note that if $i$ does not equal $j$ then the element is in the sum twice; however $i=j$ is there only once. Thus the final formula for the derivative is \[\sum_{l=1}^n\sum_{j=!}^R 2c_ic_jk(\x_l,\uu_j)\frac{d}{d\uu_i}k(\x_l, \uu_i)\]

Writing the sum with matrix multiplications we obtain \[2c_i \mathbf{c}^\top k(\mathbf{U},\mathbf{X}) [Dk(\mathbf{X},\mathbf{U})]\]
in which again $[Dk(\mathbf{X},\mathbf{U})]$ contains the $\frac{d}{d\uu_i}k(\x_j, \uu_i)$ on rows (i.e. for every $\x_j$). 
\end{itemize}

\paragraph{Squared loss}
The loss: \[\|k(\mathbf{X},\mathbf{U})\mathbf{c}-\mathbf{y}\|^2 = \langle k(\mathbf{X},\mathbf{U})\mathbf{c} , k(\mathbf{X},\mathbf{U})\mathbf{c}\rangle + \langle  \mathbf{y}, \mathbf{y}\rangle -2\langle k(\mathbf{X},\mathbf{U})\mathbf{c} , \mathbf{y}\rangle.\]

Writing the norm with inner product, we encounter the terms already investigated with the cosine loss.

% ==========================================================================

\section{Rademacher complexity}

\setcounter{theorem}{0}
\begin{theorem}
The Rademacher complexity for our hypothesis class 
\begin{align*}
   \hat{\mathcal{P}}=\bigg\{ \x\mapsto \langle w, &\phi(\x) \rangle :\\ 
    & w=\sum_{r=1}^R c_r\phi(\uu_r), \|\mathbf{c}\|_{1}<\Lambda, \uu_r\in\R^d \bigg\}. 
\end{align*} assuming $k(\x, \x)\leq \tau$, can be bounded as
\begin{equation*}% \label{eq:rademacher_bound}
    \mathcal{R}_S(\hat{P}) \leq \sqrt{\frac{\tau^2\Lambda^2}{n}}. %= \sqrt{\frac{r^4\Lambda^2}{n}}.
\end{equation*}
\end{theorem}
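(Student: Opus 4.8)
The plan is to bound the empirical Rademacher complexity
\[
\mathcal{R}_S(\hat{\mathcal{P}}) = \frac{1}{n}\,\E_{\bm\sigma}\left[\sup_{f\in\hat{\mathcal{P}}} \sum_{i=1}^n \sigma_i f(\x_i)\right]
\]
directly, exploiting the fact that every $f\in\hat{\mathcal{P}}$ is a linear functional $\langle w,\cdot\rangle$ with $w=\sum_{r=1}^R c_r\phi(\uu_r)$, $\|\mathbf{c}\|_1<\Lambda$, $\uu_r\in\R^d$. The key observation is that although the $\uu_r$ range over an infinite set, the only thing that matters for $w$ is that it lies in the convex hull (scaled by $\Lambda$) of $\{\pm\phi(\uu):\uu\in\R^d\}\subseteq\HH$. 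Indeed, writing $c_r = s_r|c_r|$ with $s_r\in\{\pm1\}$, we have $w = \sum_r |c_r|\cdot\bigl(s_r\phi(\uu_r)\bigr)$ with $\sum_r|c_r|<\Lambda$, so $w\in\Lambda\cdot\mathrm{conv}\{\pm\phi(\uu):\uu\in\R^d\}$.

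The steps I would carry out, in order: (i) Swap the supremum over $w$ for a supremum over this convex hull, and use the standard fact that the supremum of a linear function over a convex hull is attained at an extreme point — hence
\[
\sup_{f\in\hat{\mathcal{P}}}\sum_i\sigma_i f(\x_i) = \sup_{\|\mathbf{c}\|_1<\Lambda,\ \uu_r}\Bigl\langle w,\textstyle\sum_i\sigma_i\phi(\x_i)\Bigr\rangle \le \Lambda\sup_{\uu\in\R^d}\Bigl|\Bigl\langle \phi(\uu),\textstyle\sum_i\sigma_i\phi(\x_i)\Bigr\rangle\Bigr|.
\]
(ii) Apply Cauchy–Schwarz in $\HH$: $\bigl|\langle\phi(\uu),\sum_i\sigma_i\phi(\x_i)\rangle\bigr| \le \|\phi(\uu)\|_\HH\cdot\|\sum_i\sigma_i\phi(\x_i)\|_\HH$, and bound $\|\phi(\uu)\|_\HH = \sqrt{k(\uu,\uu)}\le\sqrt\tau$ using the boundedness assumption. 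This removes the dependence on $\uu$ entirely, giving the bound $\Lambda\sqrt\tau\,\|\sum_i\sigma_i\phi(\x_i)\|_\HH$. (iii) Take expectation over $\bm\sigma$ and use Jensen's inequality to pass the expectation inside the square root: $\E_{\bm\sigma}\|\sum_i\sigma_i\phi(\x_i)\|_\HH \le \bigl(\E_{\bm\sigma}\|\sum_i\sigma_i\phi(\x_i)\|_\HH^2\bigr)^{1/2}$. (iv) Expand the squared norm; since the $\sigma_i$ are independent with $\E\sigma_i=0$ and $\E\sigma_i^2=1$, the cross terms vanish and $\E_{\bm\sigma}\|\sum_i\sigma_i\phi(\x_i)\|_\HH^2 = \sum_i k(\x_i,\x_i)\le n\tau$. (v) Assemble: $\mathcal{R}_S(\hat{\mathcal{P}}) \le \frac{1}{n}\cdot\Lambda\sqrt\tau\cdot\sqrt{n\tau} = \sqrt{\tau^2\Lambda^2/n}$.

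The main obstacle — and really the only conceptually non-routine point — is step (i): justifying that the supremum over the infinite family of basis vectors $\uu_r\in\R^d$, together with the $\ell_1$ constraint on $\mathbf{c}$, collapses to a single supremum over $\uu$ of a Cauchy–Schwarz–bounded inner product. One has to be slightly careful that the relevant $w$'s need not form a closed or compact set, but this is harmless: we only need the inequality direction, and for any feasible $(\mathbf{c},\mathbf{U})$ the triangle inequality already gives $\langle w,v\rangle \le \sum_r|c_r|\,|\langle\phi(\uu_r),v\rangle| \le \|\mathbf{c}\|_1\sup_\uu|\langle\phi(\uu),v\rangle| < \Lambda\sup_\uu|\langle\phi(\uu),v\rangle|$ with $v=\sum_i\sigma_i\phi(\x_i)$, so no convex-hull or extreme-point machinery is strictly needed — a plain application of Hölder's inequality suffices. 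Everything after that is the textbook kernel Rademacher computation.
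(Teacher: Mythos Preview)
Your proposal is correct and follows essentially the same route as the paper. The only cosmetic difference is ordering: the paper applies Cauchy--Schwarz first to get $\langle w,v\rangle\le\|w\|\,\|v\|$ and then bounds $\|w\|\le\sum_r|c_r|\,\|\phi(\uu_r)\|\le\sqrt\tau\,\Lambda$ via the triangle inequality, whereas you apply the triangle inequality (H\"older) first and Cauchy--Schwarz second; both reach $\Lambda\sqrt\tau\,\|\sum_i\sigma_i\phi(\x_i)\|$ and proceed identically thereafter through Jensen and the standard Rademacher second-moment computation.
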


\begin{proof}

\begin{align*}
    \mathcal{R}_S(\hat{P}) = & \frac{1}{n} \mathbb{E}_\sigma \left[  \sup_{\mathbf{U}, \|\mathbf{c}\|_1 <\Lambda } \left\langle w, \sum_{i=1}^m \sigma_i\phi(x_i) \right\rangle \right] \\
    = & \frac{1}{n} \mathbb{E}_\sigma \left[  \sup_{\mathbf{U}, \|\mathbf{c}\|_1 <\Lambda } \left\langle \sum_{r=1}^Rc_r\phi(u_r), \sum_{i=1}^m \sigma_i\phi(x_i) \right\rangle \right] \\
    \leq & \frac{1}{n} \mathbb{E}_\sigma \left[  \sup_{\mathbf{U}, \|\mathbf{c}\|_1 <\Lambda } \left\|\sum_{r=1}^Rc_r\phi(u_r)\right\|\left\| \sum_{i=1}^m \sigma_i\phi(x_i)\right\| \right]  \\ % \quad \text{Cauchy-Schwarz} \\
    \leq & \frac{1}{n} \mathbb{E}_\sigma \left[  \sup_{\mathbf{U}, \|\mathbf{c}\|_1 <\Lambda } \sum_{r=1}^R|c_r|\left\|\phi(u_r)\right\|\left\| \sum_{i=1}^m \sigma_i\phi(x_i)\right\| \right]  \\ %\quad \text{triangle inequality}\\
    = & \frac{1}{n} \mathbb{E}_\sigma \left[  \sup_{\mathbf{U}, \|\mathbf{c}\|_1 <\Lambda } \sum_{r=1}^R|c_r|\sqrt{k(u_r, u_r)}\left\| \sum_{i=1}^m \sigma_i\phi(x_i)\right\| \right] \\
    \leq & \frac{1}{n} \mathbb{E}_\sigma \left[  \sup_{\|\mathbf{c}\|_1 <\Lambda } \sqrt{\tau}\sum_{r=1}^R|c_r|   \left\| \sum_{i=1}^m \sigma_i\phi(x_i)\right\| \right] \\
    \leq & \frac{\Lambda \sqrt{\tau}}{n} \mathbb{E}_\sigma \left[   \left\| \sum_{i=1}^m \sigma_i\phi(x_i)\right\| \right] \\
    % \leq & \cdots \text{(follow traditional case from here)} \\
    \leq & \frac{\Lambda \sqrt{\tau}}{n} \left[ \mathbb{E}_\sigma   \left[  \left\| \sum_{i=1}^m \sigma_i\phi(x_i)\right\|^2 \right] \right]^{1/2} \\
%\end{align*}
%\begin{align*}
    = & \frac{\Lambda \sqrt{\tau}}{n} \left[ \mathbb{E}_\sigma   \left[ \sum_{i=1}^m  \left\| \phi(x_i)\right\|^2 \right] \right]^{1/2} \\
    = & \frac{\Lambda \sqrt{\tau}}{n} \left[ \mathbb{E}_\sigma   \left[ \sum_{i=1}^m  k(x_i, x_i) \right] \right]^{1/2} \\
    = & \frac{\Lambda  \sqrt{\tau Tr[\mathbf{K}]}}{n} \leq \frac{\Lambda \sqrt{\tau n\tau}}{n}  = \sqrt{\frac{\tau^2\Lambda^2}{n}}\\  % \sqrt{\frac{\tau[\tau\Lambda^2]}{n}} = 
\end{align*}

\end{proof}

A generalisation bound can be obtained from this by using well-known techniques~\cite{bartlett2002rademacher,koltchinskii2002empirical,mohri2018foundations}.

% ==========================================================================

\section{Illustrative experiments: MNIST}

All the basis vectors found by our \PKM{} algorithm with the OVO scheme (applied with 2 basis vectors per binary classifier) are displayed in the Table~\ref{tab:mnist_all_ovo}.

\begin{table*}[]
    \centering
    \begin{tabular}{|c|c|c|c|c|c|c|c|c|c|c|}
\midrule

&0  & 1  & 2  & 3  & 4  & 5  & 6  & 7  & 8  & 9 \\
\midrule
\multirow{2}{*}{0}  & 
 & 
 & 
 & 
 & 
 & 
 & 
 & 
 & 
 & 
\\
 & 
 & 
 & 
 & 
 & 
 & 
 & 
 & 
 & 
 & 
\\
\midrule
\multirow{2}{*}{1}  & 
\includegraphics[width=0.07\linewidth]{mnist_basis_elements/ovo/0-1_u0.png}
 & 
 & 
 & 
 & 
 & 
 & 
 & 
 & 
 & 
\\
 & 
\includegraphics[width=0.07\linewidth]{mnist_basis_elements/ovo/0-1_u1.png}
 & 
 & 
 & 
 & 
 & 
 & 
 & 
 & 
 & 
\\
\midrule
\multirow{2}{*}{2}  & 
\includegraphics[width=0.07\linewidth]{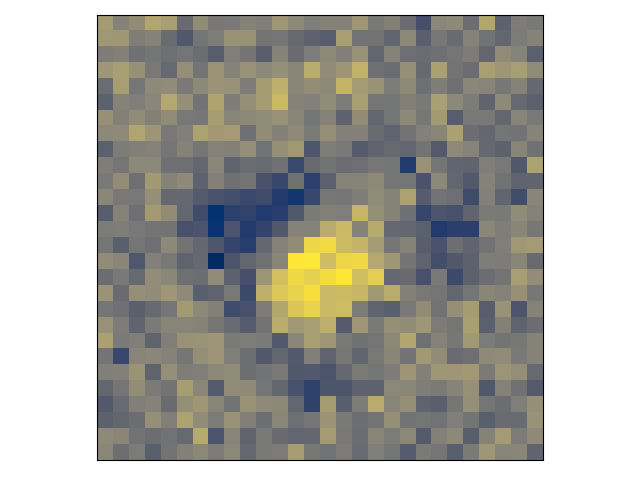}
 & 
\includegraphics[width=0.07\linewidth]{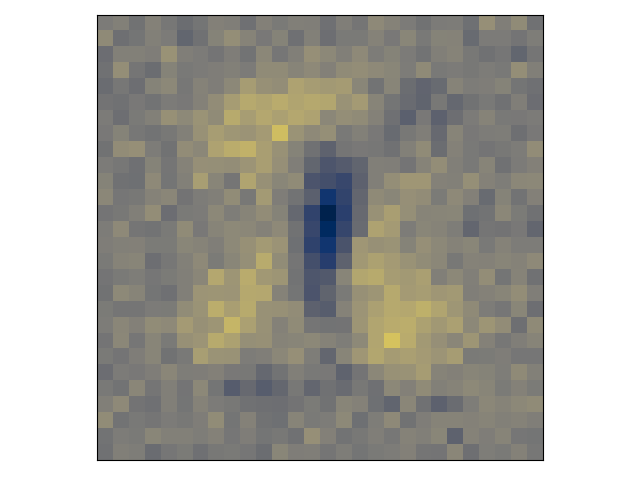}
 & 
 & 
 & 
 & 
 & 
 & 
 & 
 & 
\\
 & 
\includegraphics[width=0.07\linewidth]{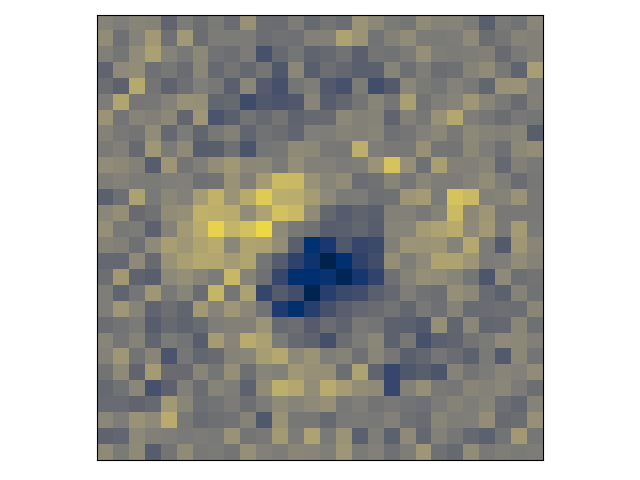}
 & 
\includegraphics[width=0.07\linewidth]{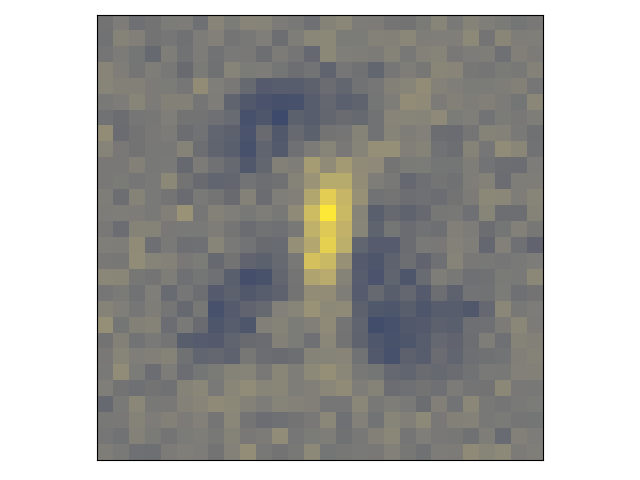}
 & 
 & 
 & 
 & 
 & 
 & 
 & 
 & 
\\
\midrule
\multirow{2}{*}{3}  & 
\includegraphics[width=0.07\linewidth]{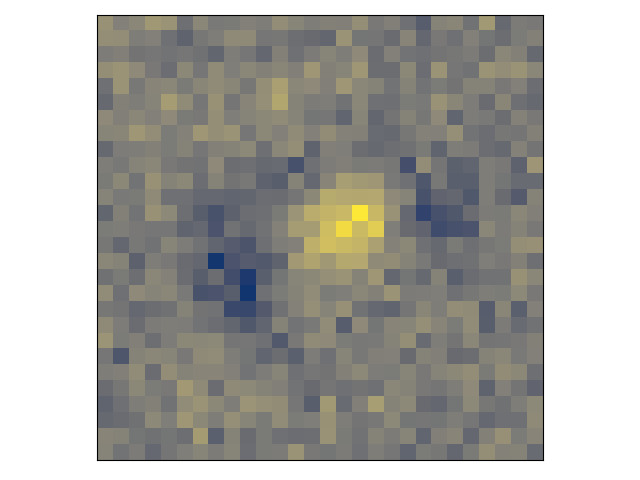}
 & 
\includegraphics[width=0.07\linewidth]{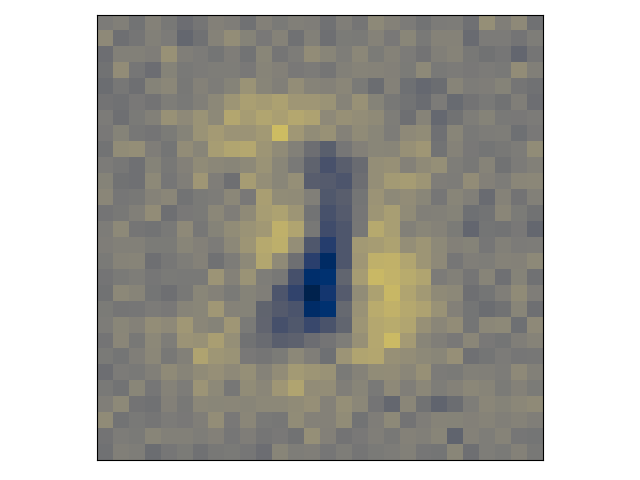}
 & 
\includegraphics[width=0.07\linewidth]{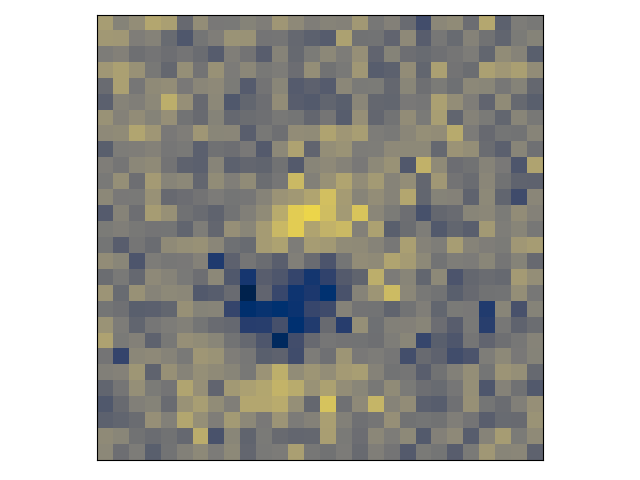}
 & 
 & 
 & 
 & 
 & 
 & 
 & 
\\
 & 
\includegraphics[width=0.07\linewidth]{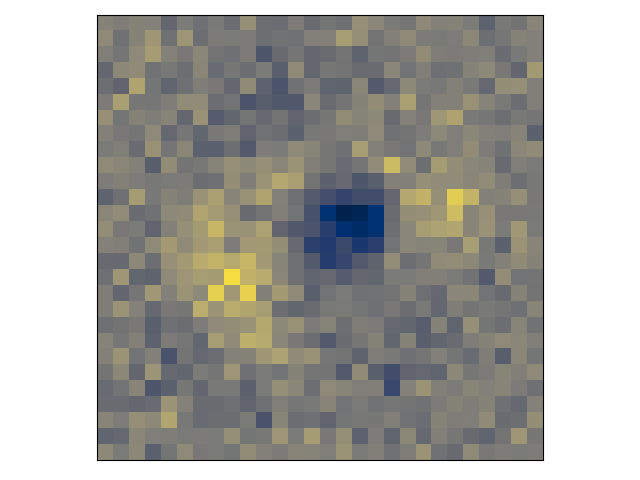}
 & 
\includegraphics[width=0.07\linewidth]{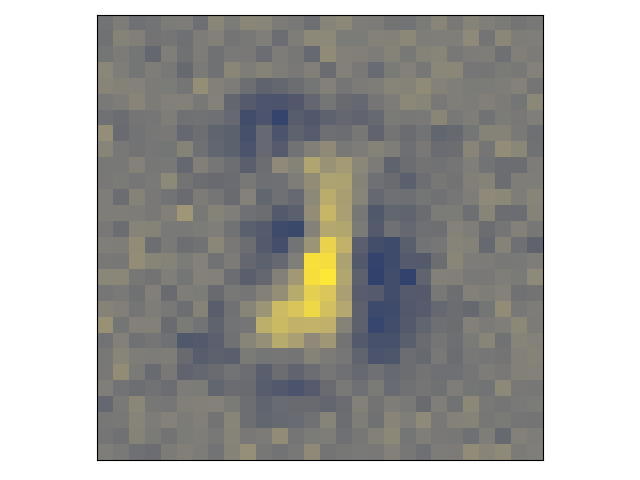}
 & 
\includegraphics[width=0.07\linewidth]{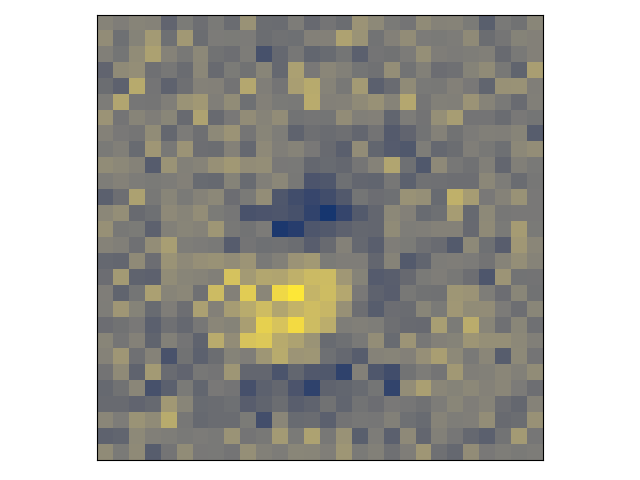}
 & 
 & 
 & 
 & 
 & 
 & 
 & 
\\
\midrule
\multirow{2}{*}{4}  & 
\includegraphics[width=0.07\linewidth]{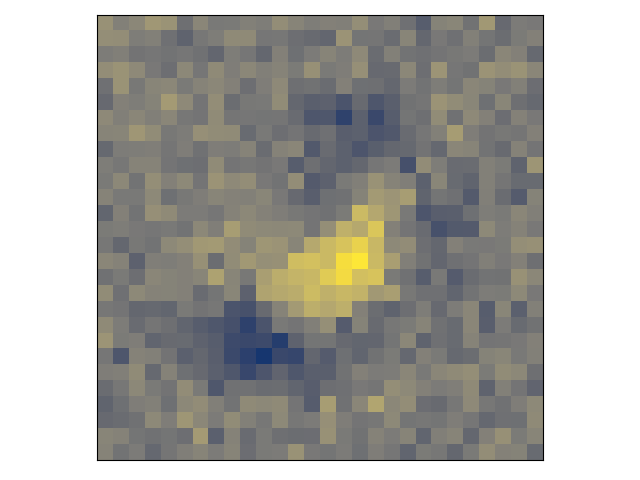}
 & 
\includegraphics[width=0.07\linewidth]{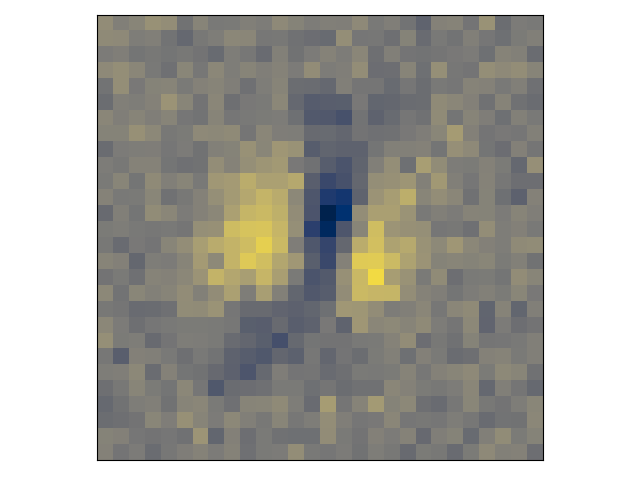}
 & 
\includegraphics[width=0.07\linewidth]{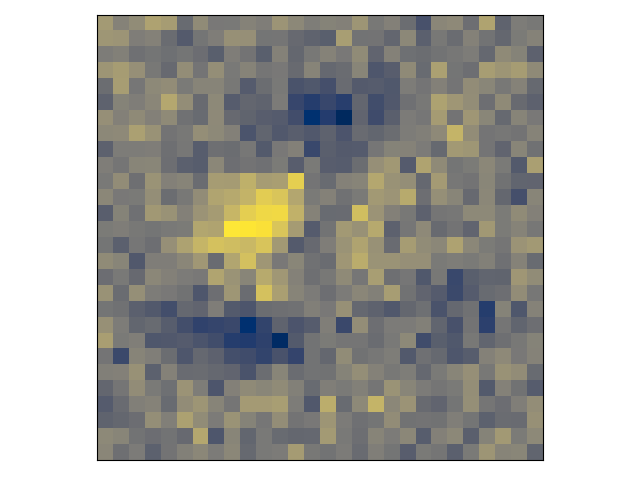}
 & 
\includegraphics[width=0.07\linewidth]{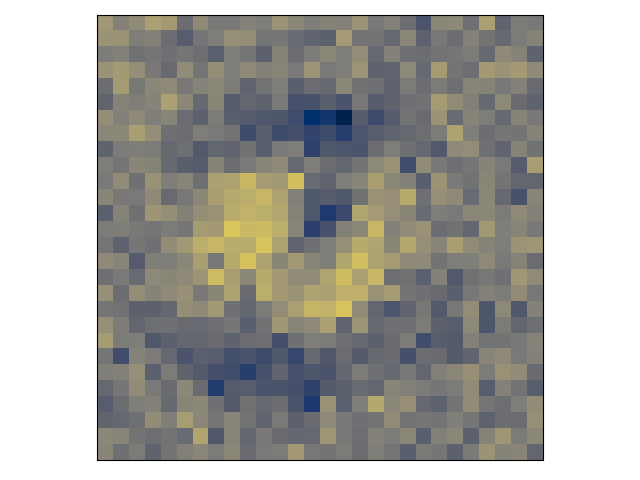}
 & 
 & 
 & 
 & 
 & 
 & 
\\
 & 
\includegraphics[width=0.07\linewidth]{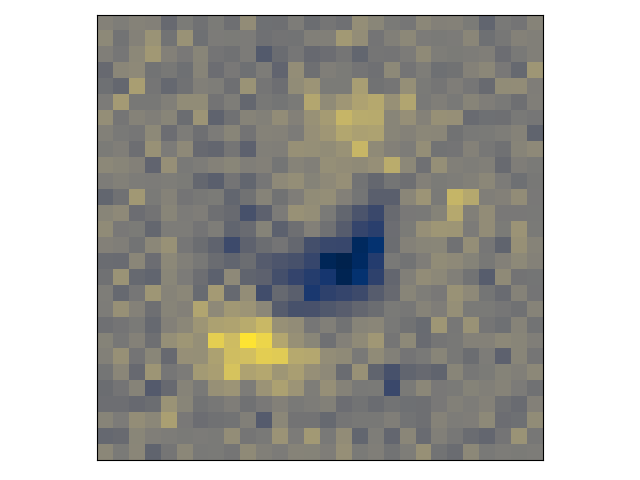}
 & 
\includegraphics[width=0.07\linewidth]{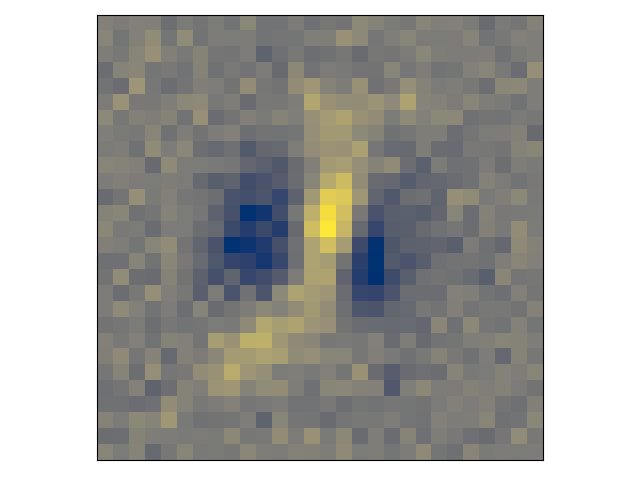}
 & 
\includegraphics[width=0.07\linewidth]{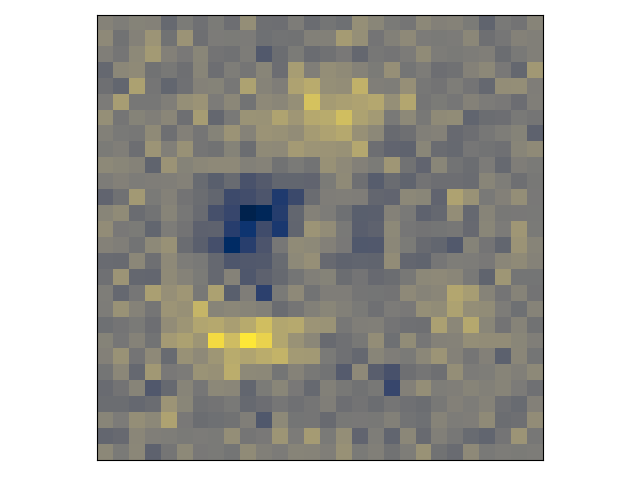}
 & 
\includegraphics[width=0.07\linewidth]{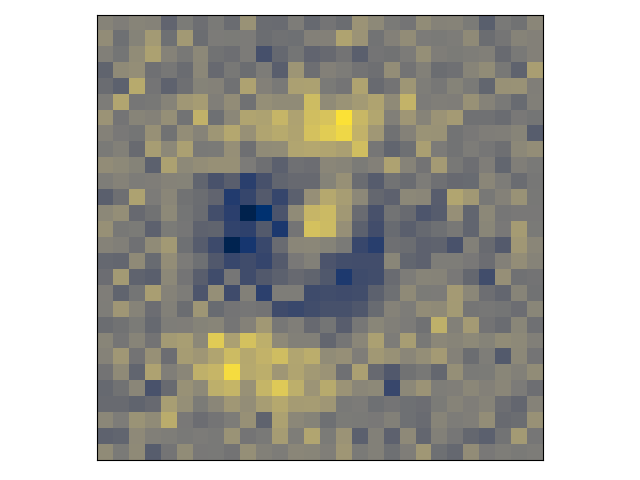}
 & 
 & 
 & 
 & 
 & 
 & 
\\
\midrule
\multirow{2}{*}{5}  & 
\includegraphics[width=0.07\linewidth]{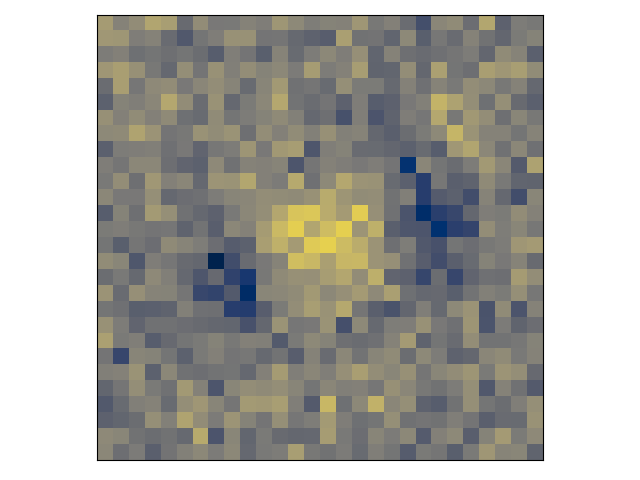}
 & 
\includegraphics[width=0.07\linewidth]{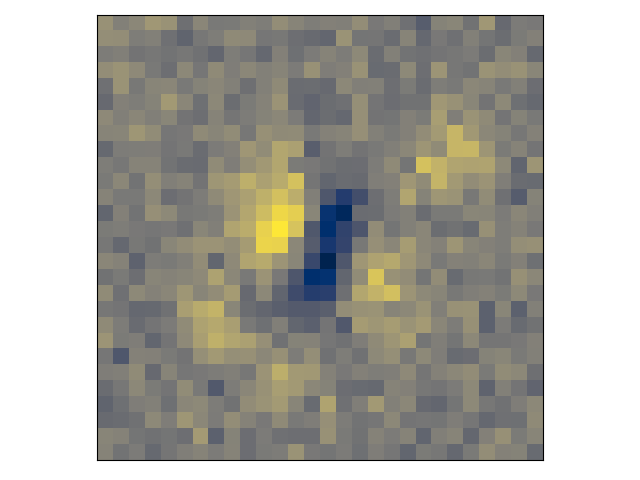}
 & 
\includegraphics[width=0.07\linewidth]{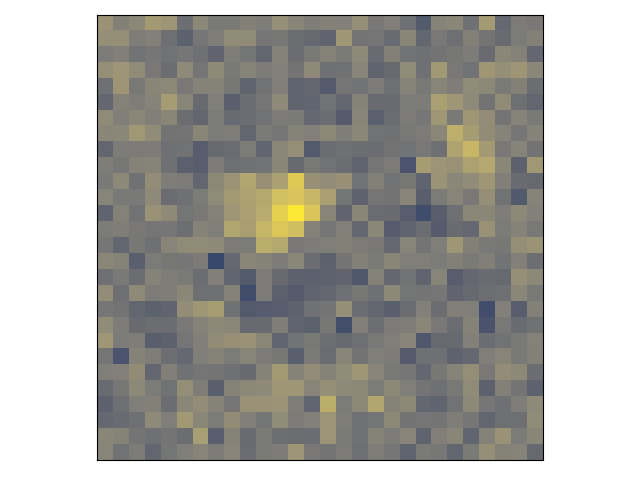}
 & 
\includegraphics[width=0.07\linewidth]{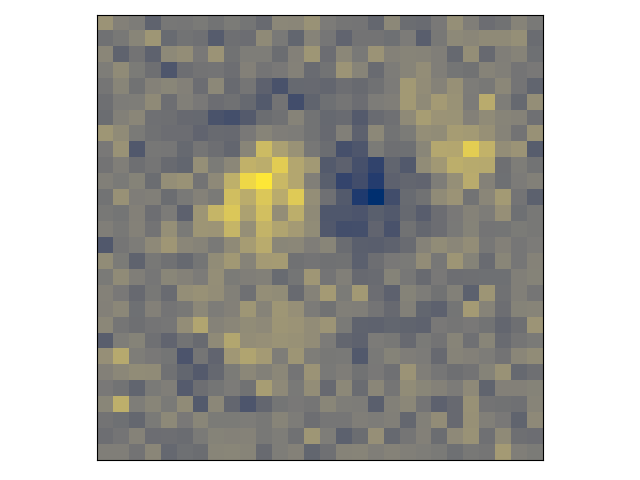}
 & 
\includegraphics[width=0.07\linewidth]{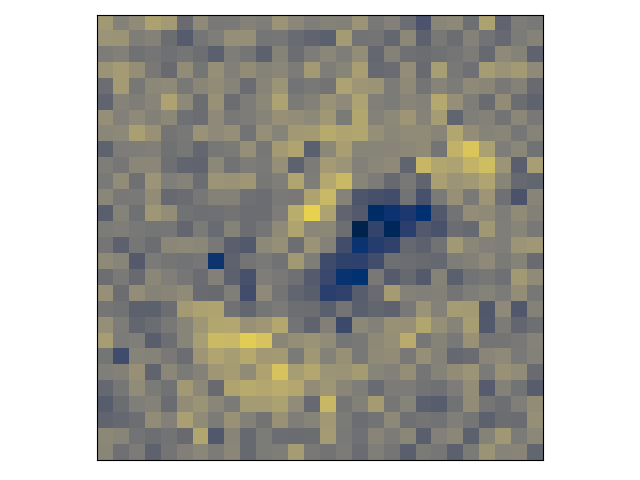}
 & 
 & 
 & 
 & 
 & 
\\
 & 
\includegraphics[width=0.07\linewidth]{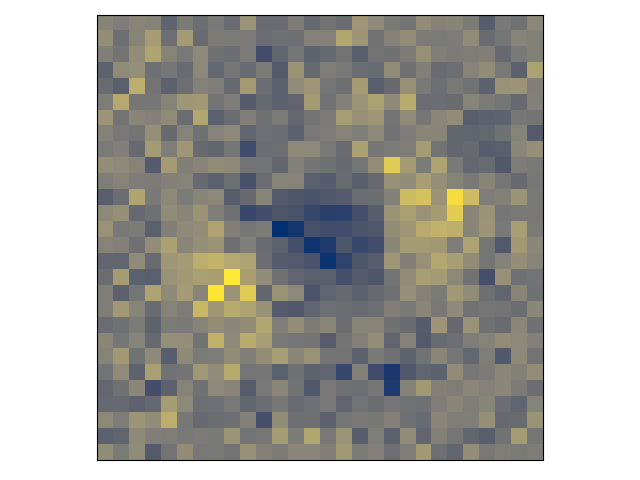}
 & 
\includegraphics[width=0.07\linewidth]{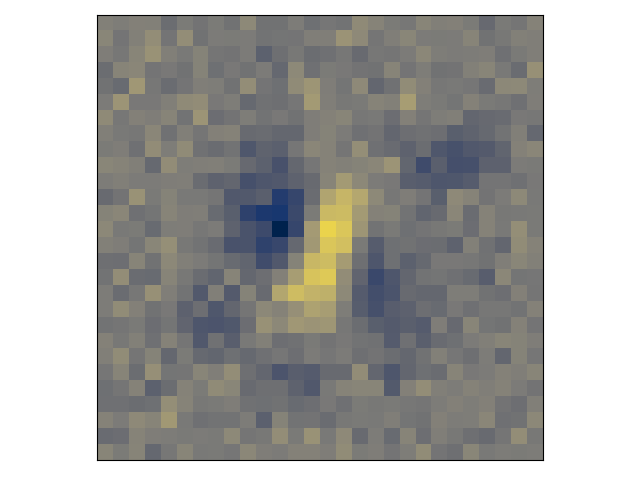}
 & 
\includegraphics[width=0.07\linewidth]{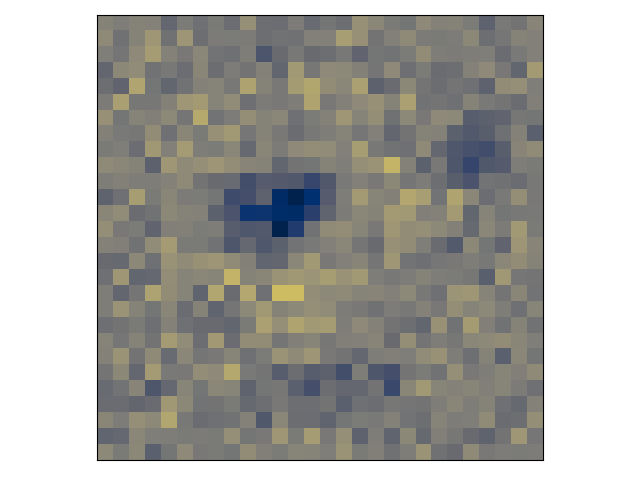}
 & 
\includegraphics[width=0.07\linewidth]{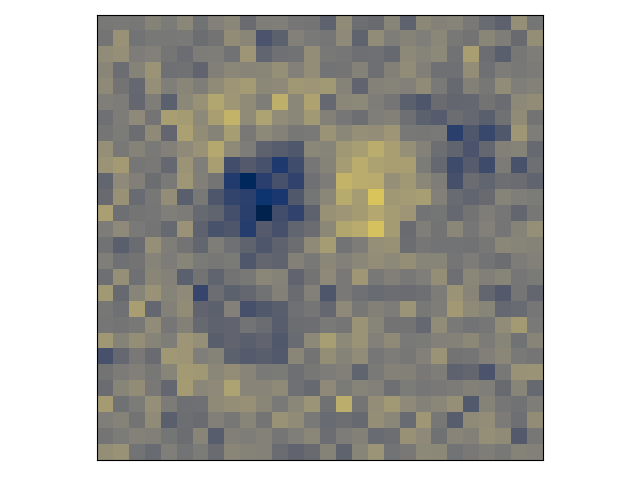}
 & 
\includegraphics[width=0.07\linewidth]{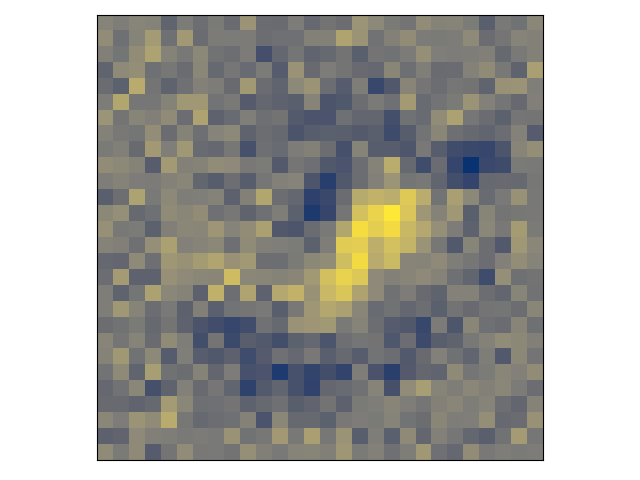}
 & 
 & 
 & 
 & 
 & 
\\
\midrule
\multirow{2}{*}{6}  & 
\includegraphics[width=0.07\linewidth]{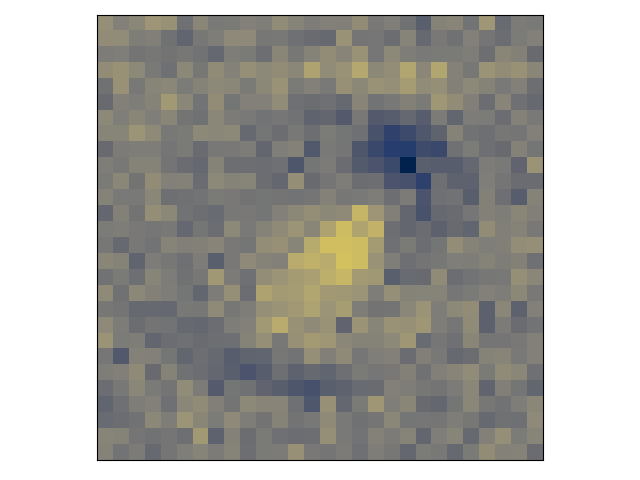}
 & 
\includegraphics[width=0.07\linewidth]{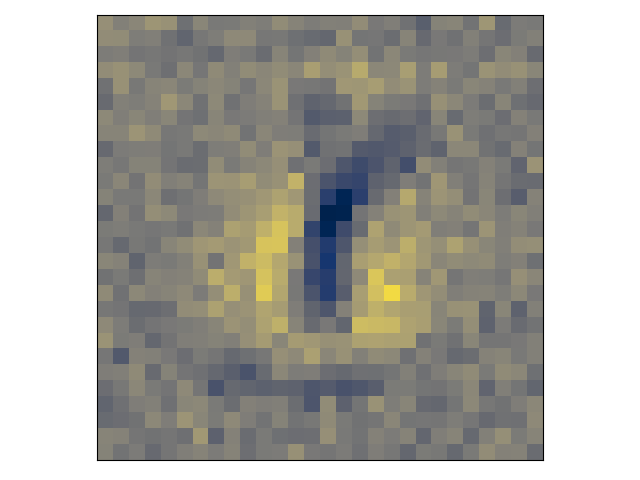}
 & 
\includegraphics[width=0.07\linewidth]{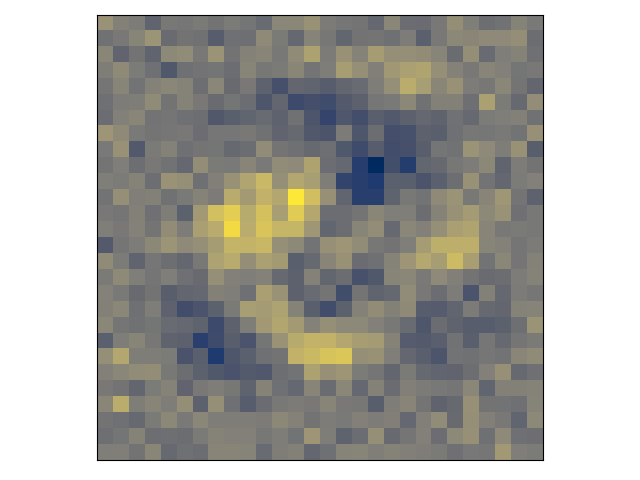}
 & 
\includegraphics[width=0.07\linewidth]{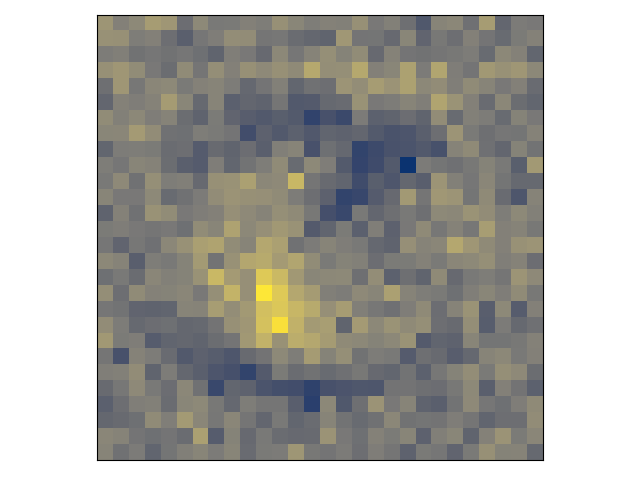}
 & 
\includegraphics[width=0.07\linewidth]{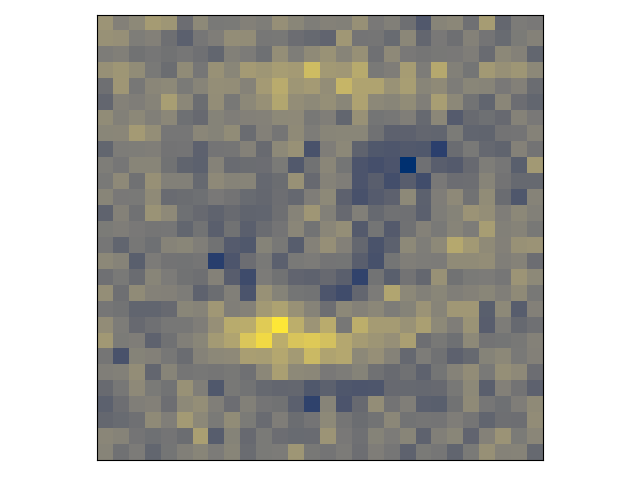}
 & 
\includegraphics[width=0.07\linewidth]{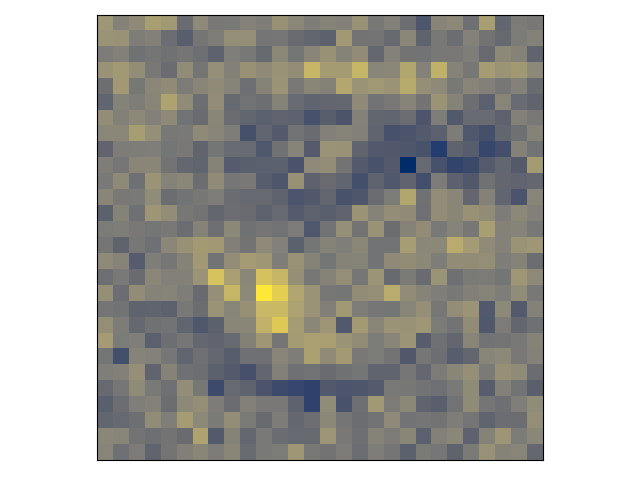}
 & 
 & 
 & 
 & 
\\
 & 
\includegraphics[width=0.07\linewidth]{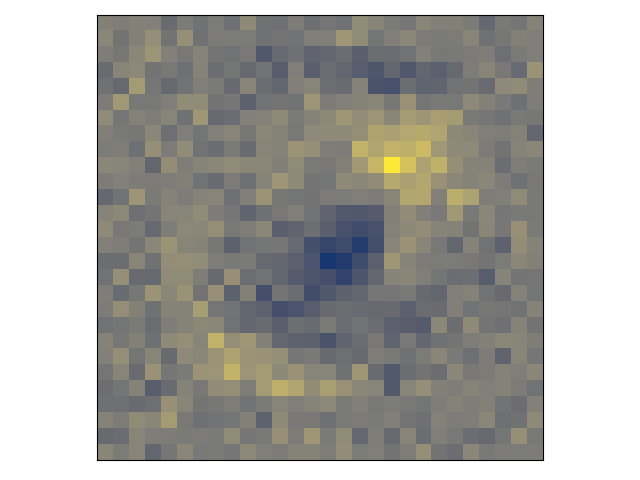}
 & 
\includegraphics[width=0.07\linewidth]{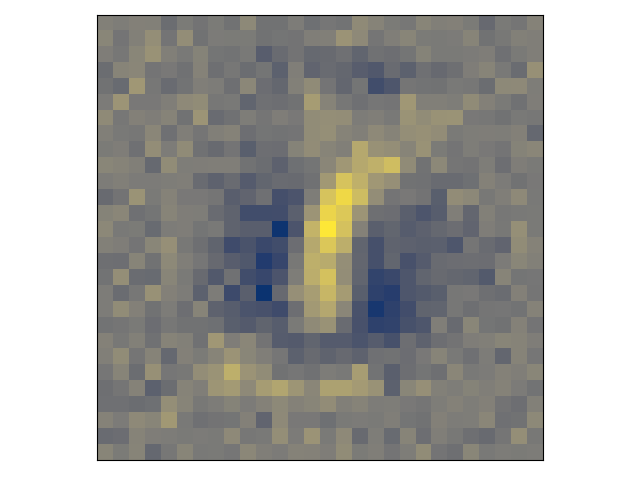}
 & 
\includegraphics[width=0.07\linewidth]{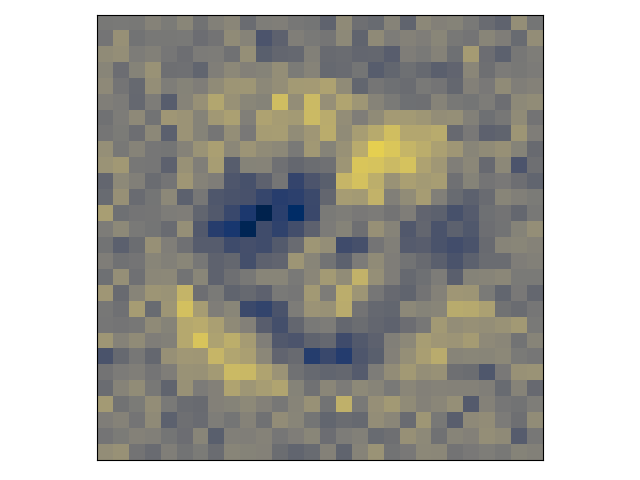}
 & 
\includegraphics[width=0.07\linewidth]{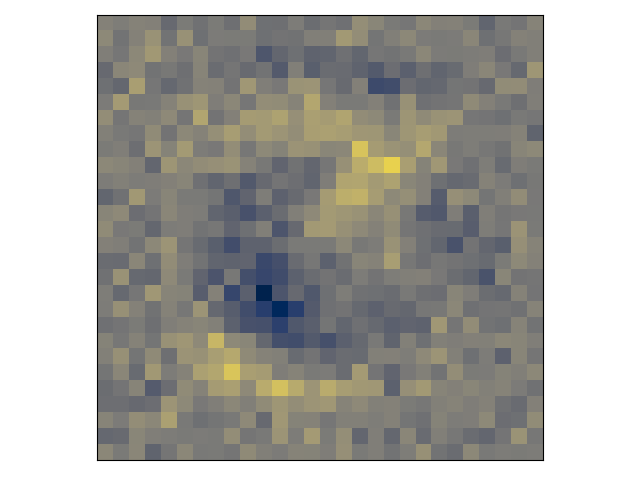}
 & 
\includegraphics[width=0.07\linewidth]{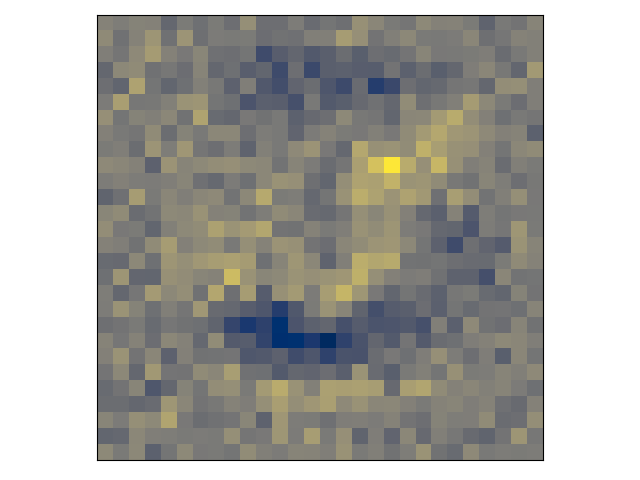}
 & 
\includegraphics[width=0.07\linewidth]{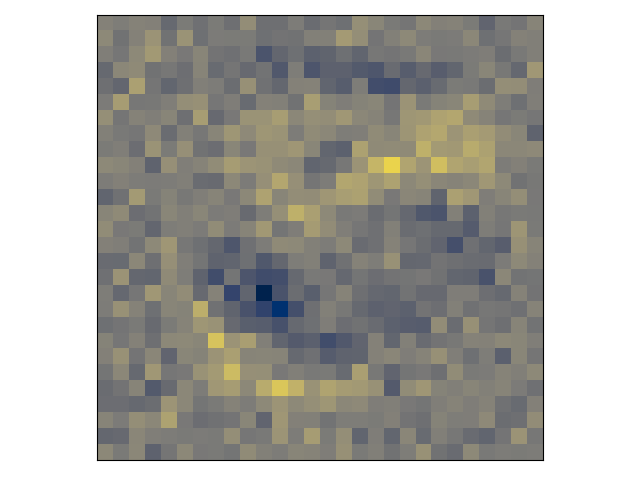}
 & 
 & 
 & 
 & 
\\
\midrule
\multirow{2}{*}{7}  & 
\includegraphics[width=0.07\linewidth]{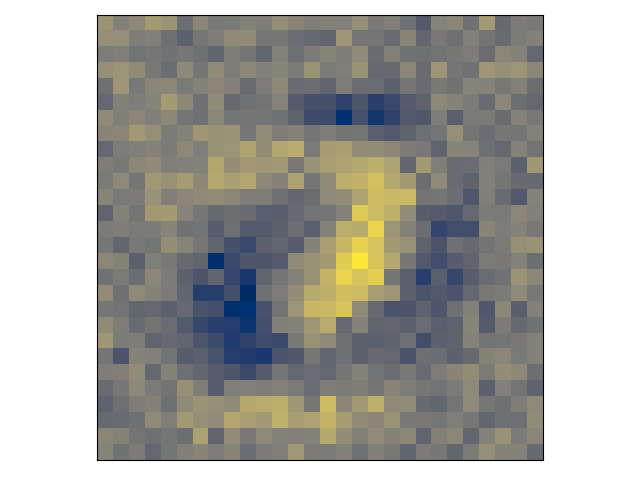}
 & 
\includegraphics[width=0.07\linewidth]{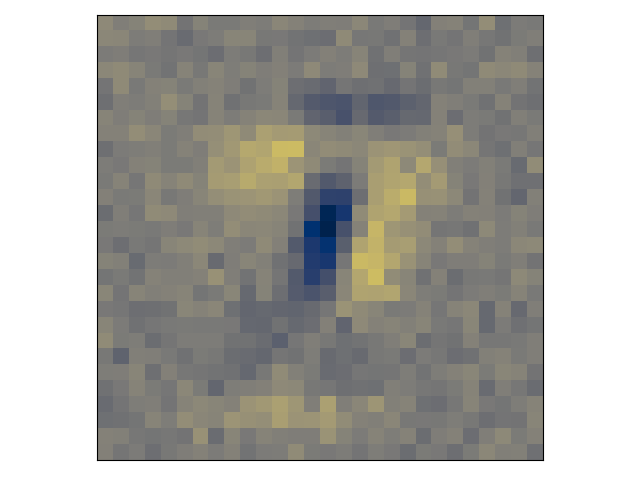}
 & 
\includegraphics[width=0.07\linewidth]{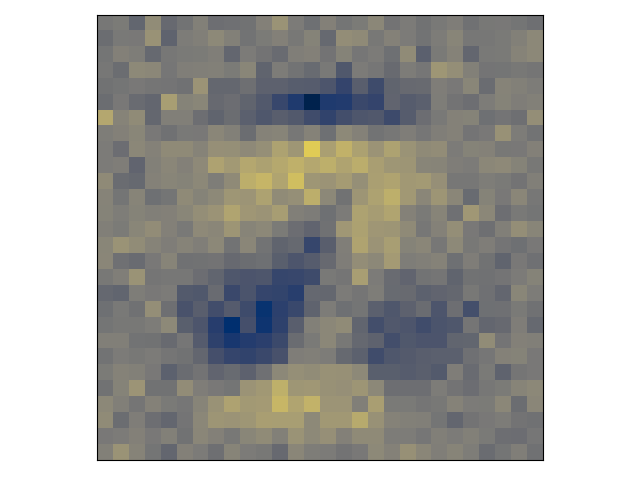}
 & 
\includegraphics[width=0.07\linewidth]{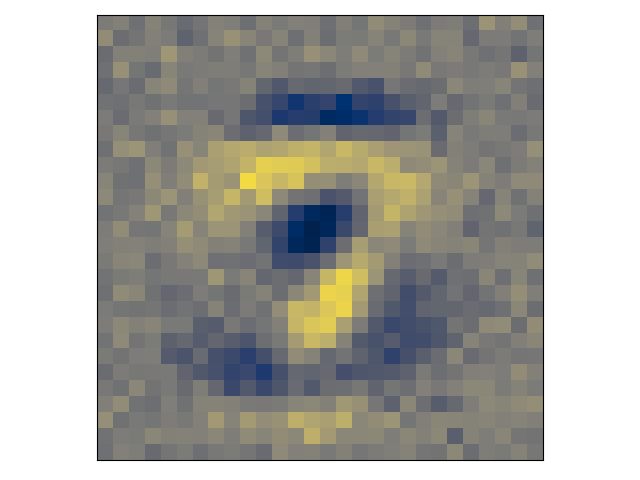}
 & 
\includegraphics[width=0.07\linewidth]{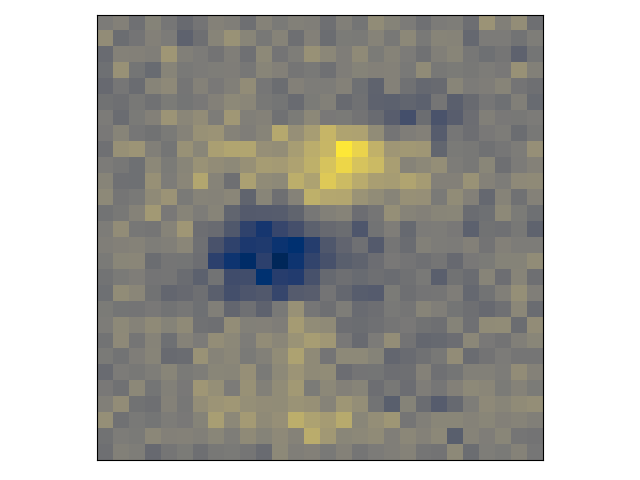}
 & 
\includegraphics[width=0.07\linewidth]{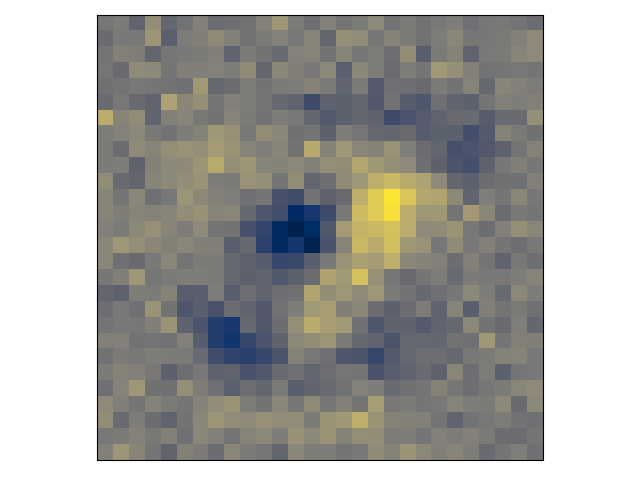}
 & 
\includegraphics[width=0.07\linewidth]{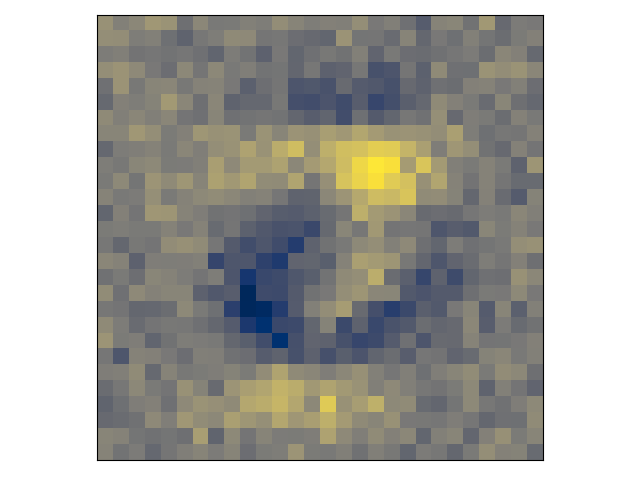}
 & 
 & 
 & 
\\
 & 
\includegraphics[width=0.07\linewidth]{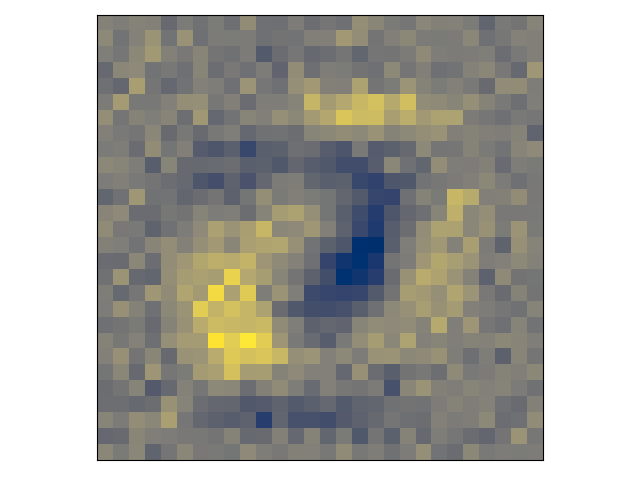}
 & 
\includegraphics[width=0.07\linewidth]{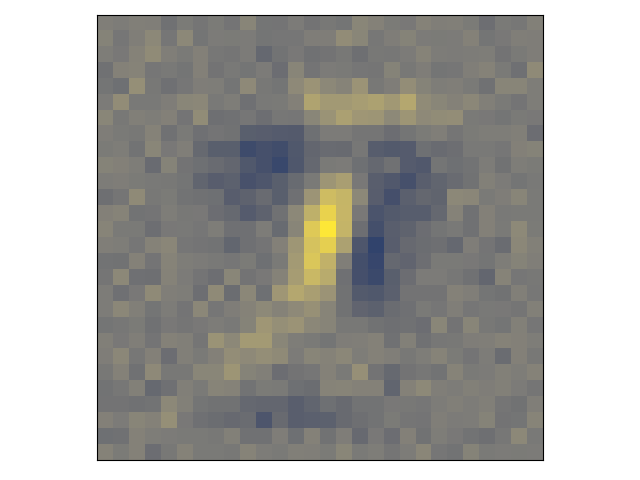}
 & 
\includegraphics[width=0.07\linewidth]{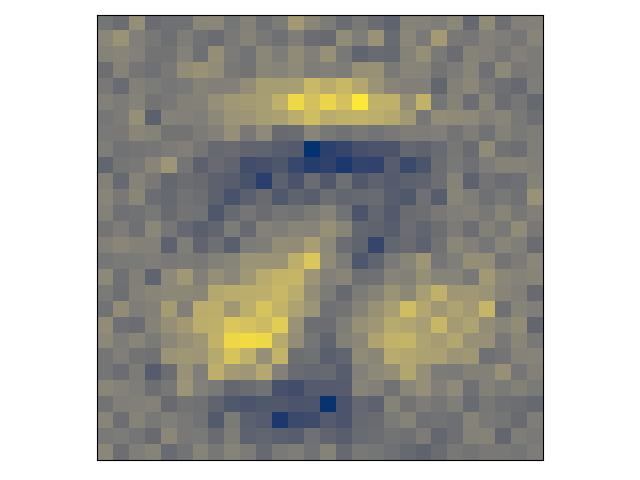}
 & 
\includegraphics[width=0.07\linewidth]{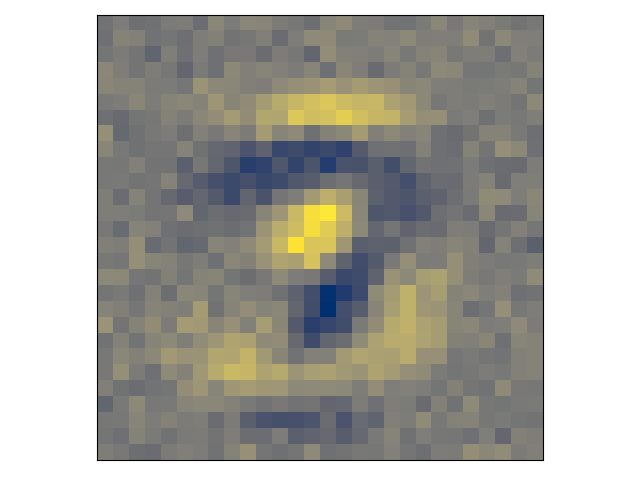}
 & 
\includegraphics[width=0.07\linewidth]{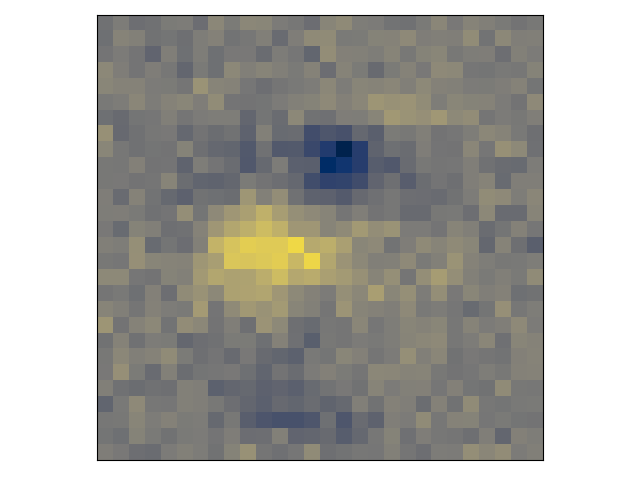}
 & 
\includegraphics[width=0.07\linewidth]{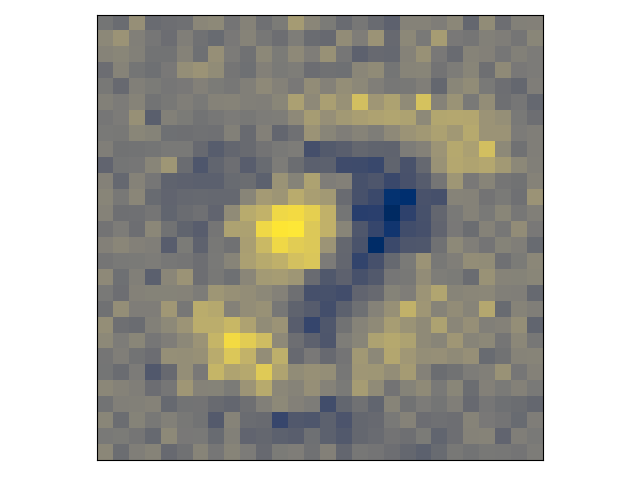}
 & 
\includegraphics[width=0.07\linewidth]{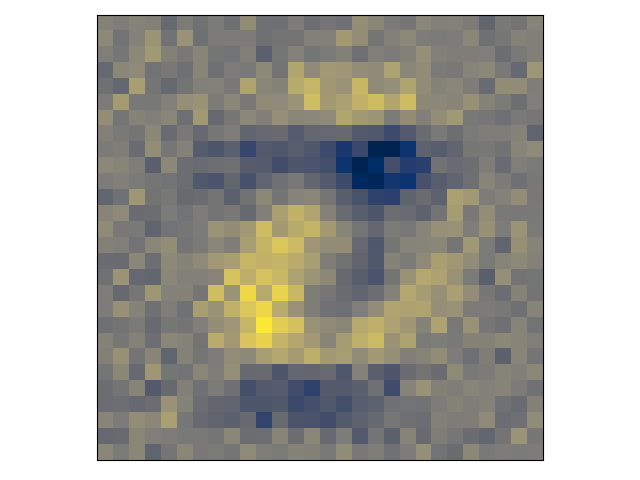}
 & 
 & 
 & 
\\
\midrule
\multirow{2}{*}{8}  & 
\includegraphics[width=0.07\linewidth]{mnist_basis_elements/ovo/0-8_u0.png}
 & 
\includegraphics[width=0.07\linewidth]{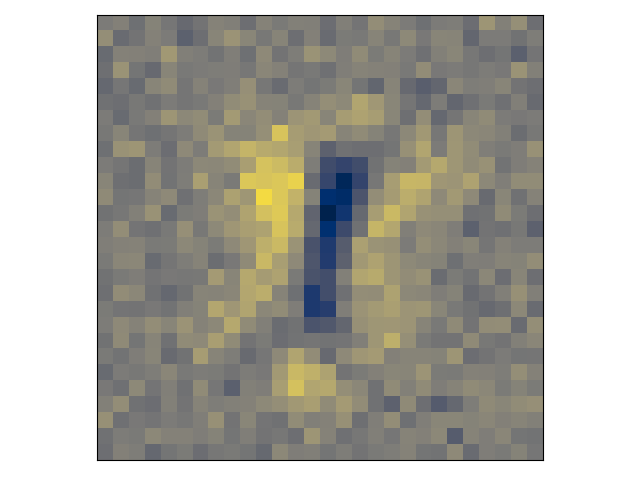}
 & 
\includegraphics[width=0.07\linewidth]{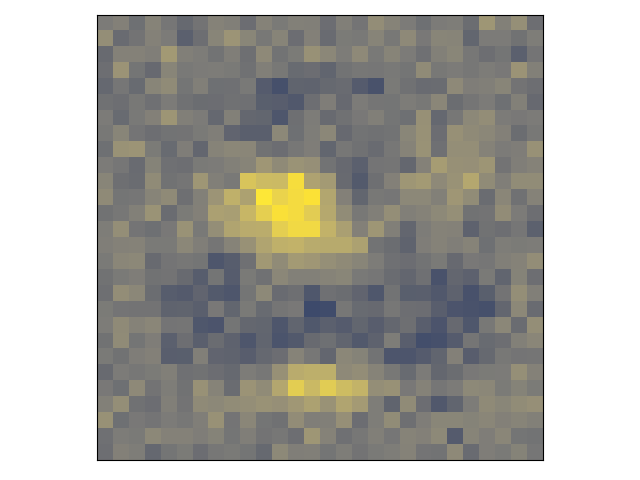}
 & 
\includegraphics[width=0.07\linewidth]{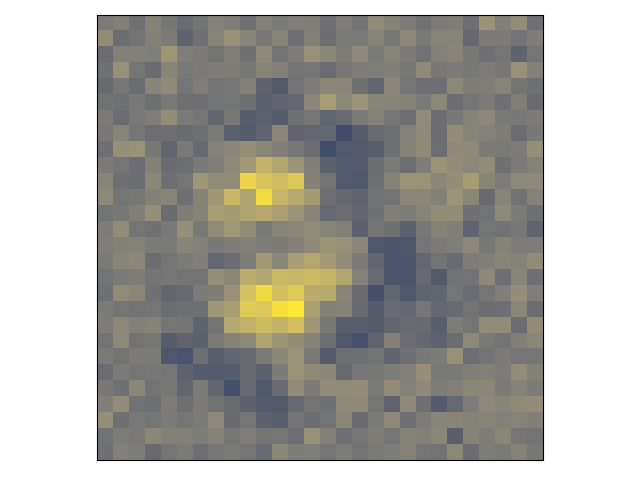}
 & 
\includegraphics[width=0.07\linewidth]{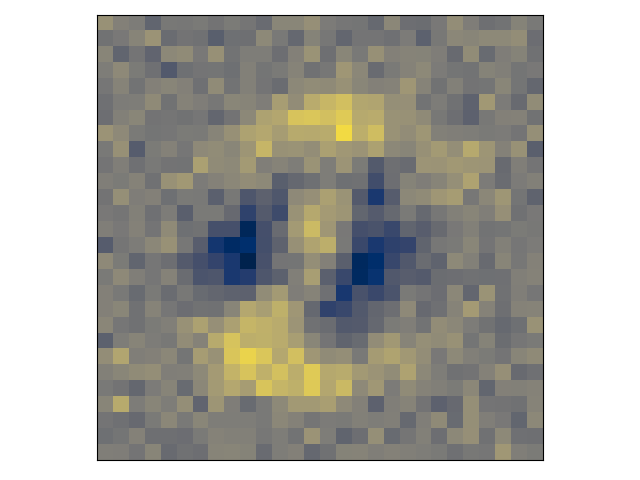}
 & 
\includegraphics[width=0.07\linewidth]{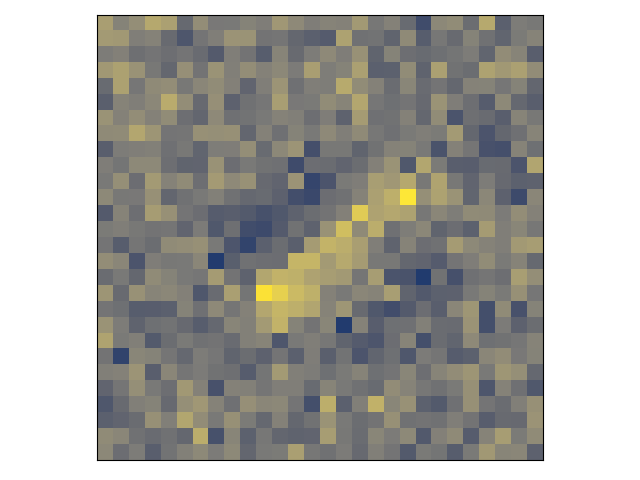}
 & 
\includegraphics[width=0.07\linewidth]{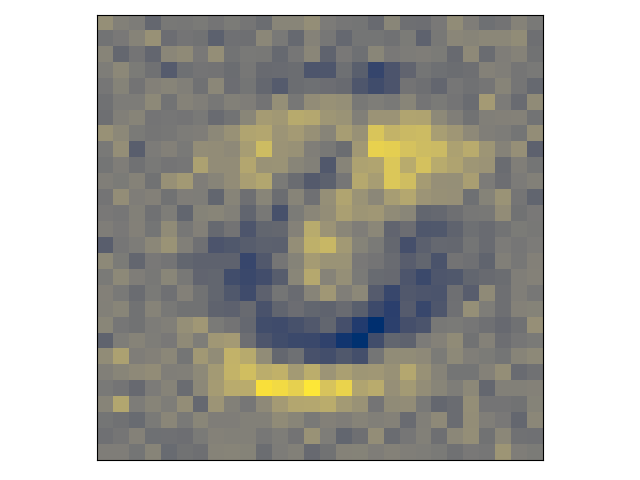}
 & 
\includegraphics[width=0.07\linewidth]{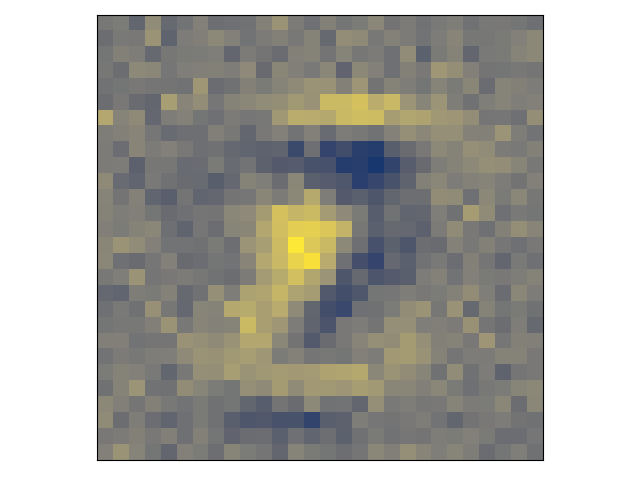}
 & 
 & 
\\
 & 
\includegraphics[width=0.07\linewidth]{mnist_basis_elements/ovo/0-8_u1.png}
 & 
\includegraphics[width=0.07\linewidth]{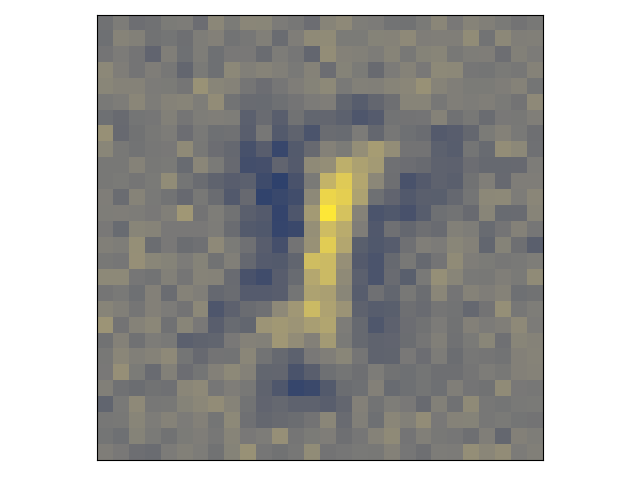}
 & 
\includegraphics[width=0.07\linewidth]{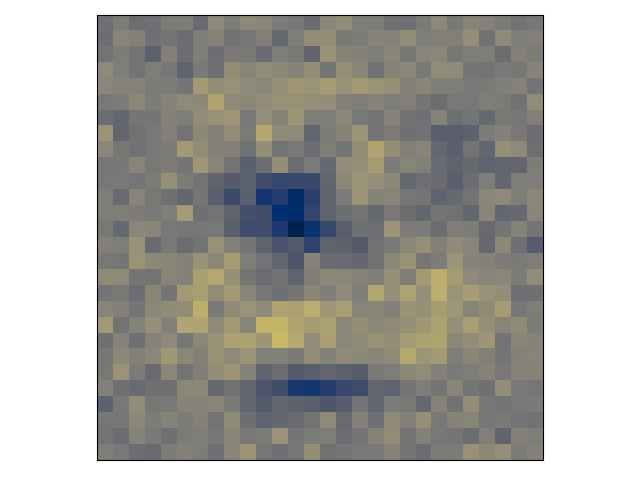}
 & 
\includegraphics[width=0.07\linewidth]{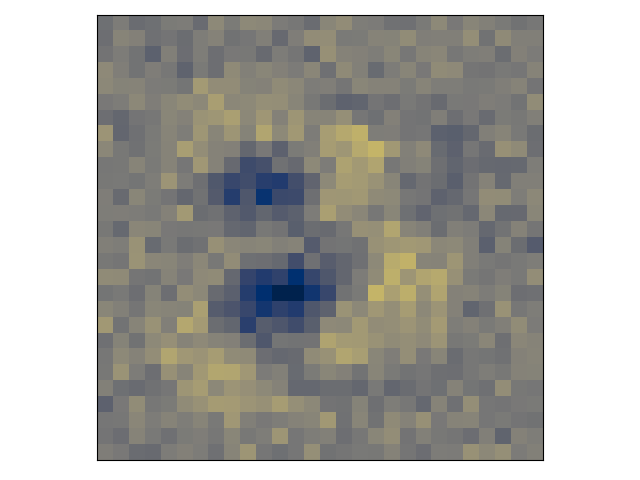}
 & 
\includegraphics[width=0.07\linewidth]{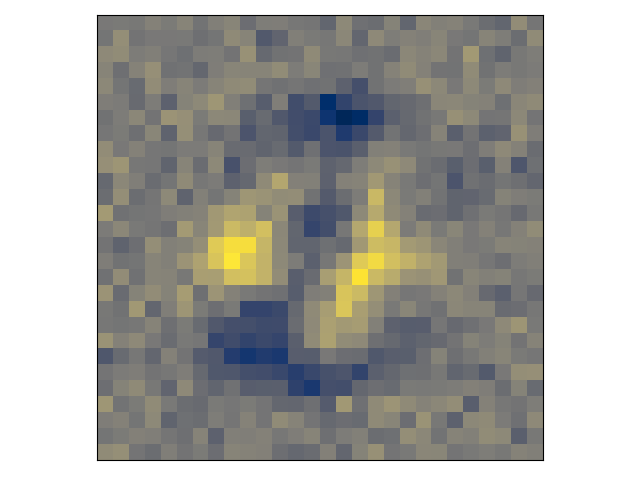}
 & 
\includegraphics[width=0.07\linewidth]{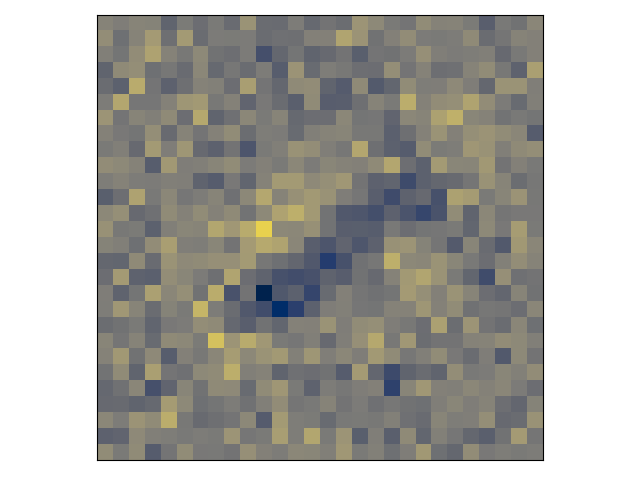}
 & 
\includegraphics[width=0.07\linewidth]{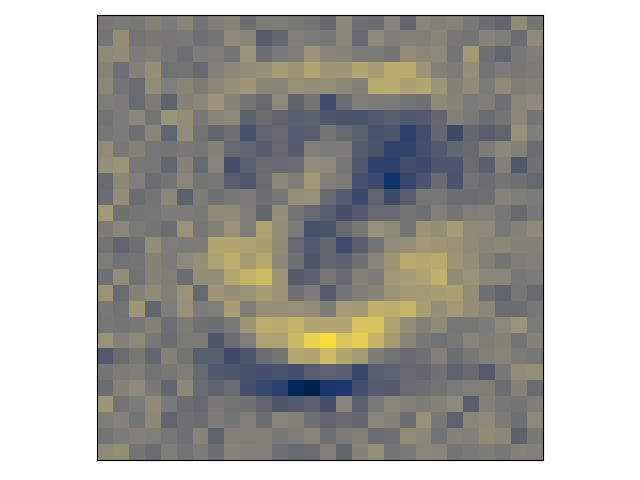}
 & 
\includegraphics[width=0.07\linewidth]{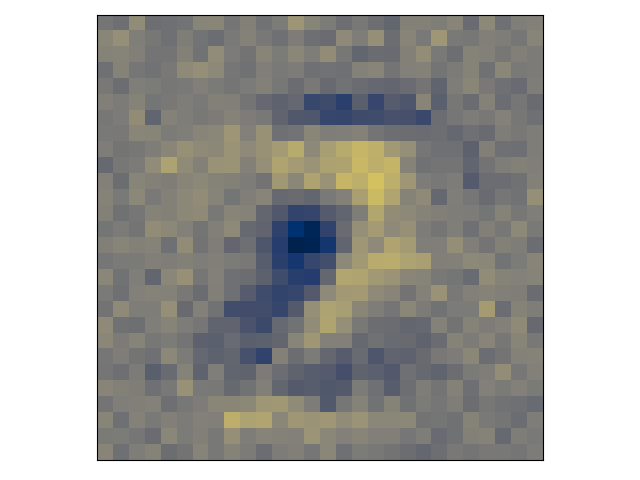}
 & 
 & 
\\
\midrule
\multirow{2}{*}{9}  & 
\includegraphics[width=0.07\linewidth]{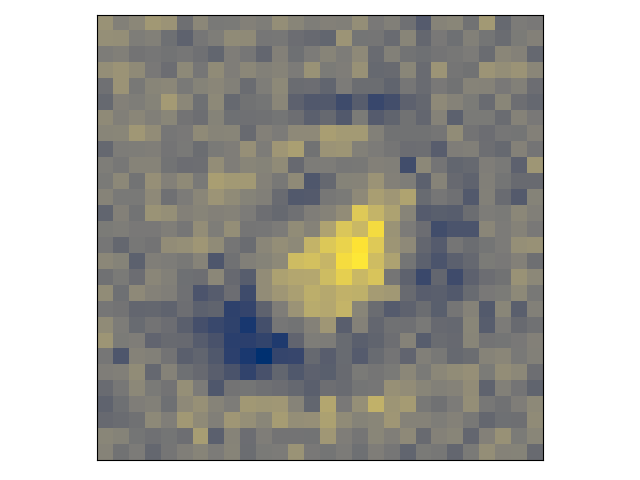}
 & 
\includegraphics[width=0.07\linewidth]{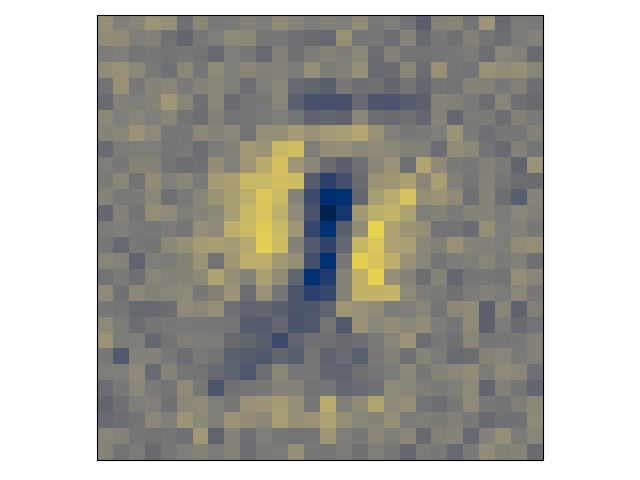}
 & 
\includegraphics[width=0.07\linewidth]{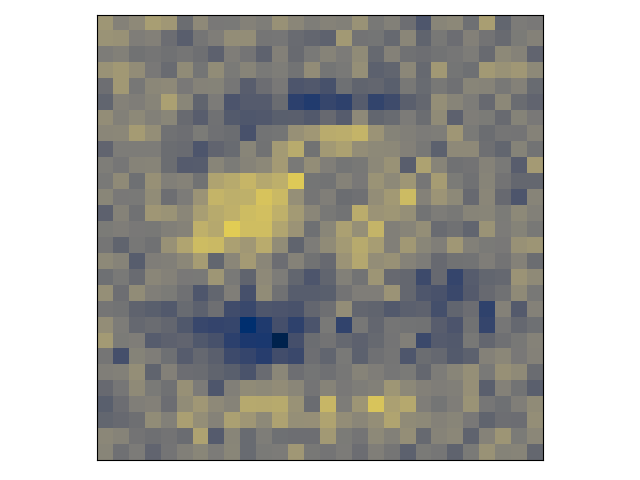}
 & 
\includegraphics[width=0.07\linewidth]{mnist_basis_elements/ovo/3-9_u0.png}
 & 
\includegraphics[width=0.07\linewidth]{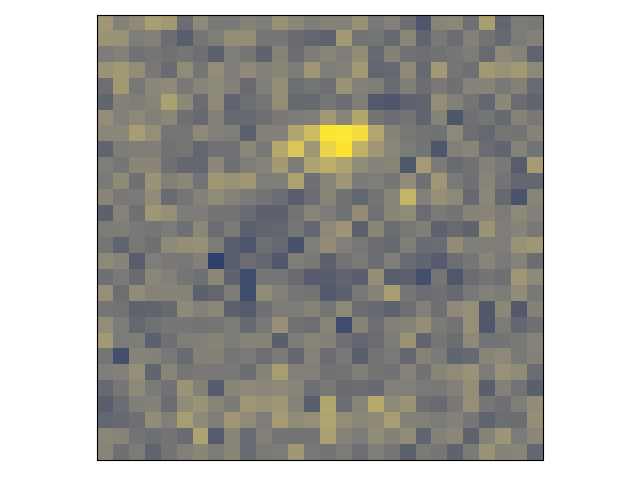}
 & 
\includegraphics[width=0.07\linewidth]{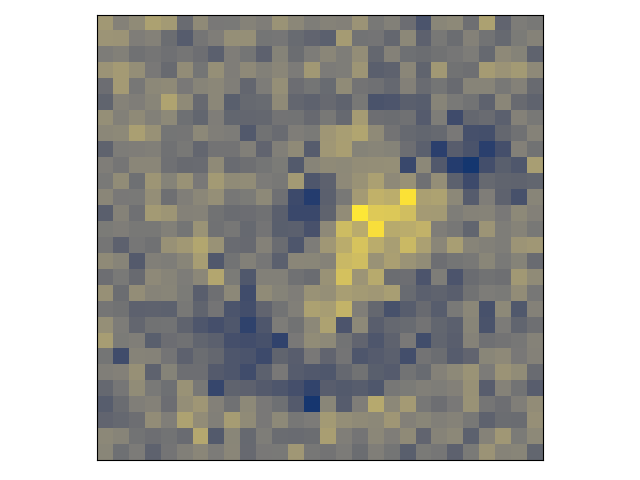}
 & 
\includegraphics[width=0.07\linewidth]{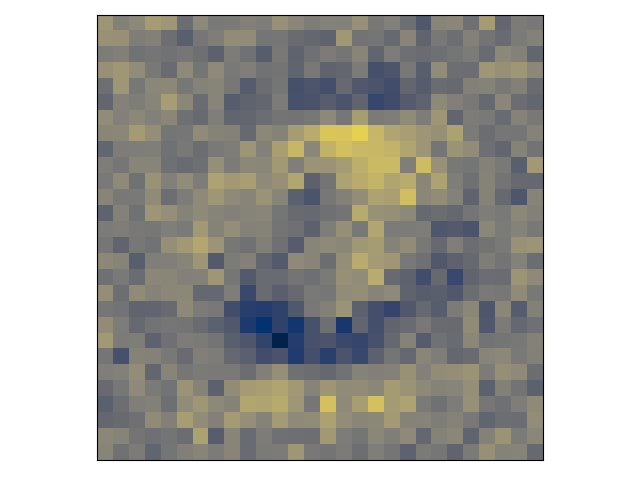}
 & 
\includegraphics[width=0.07\linewidth]{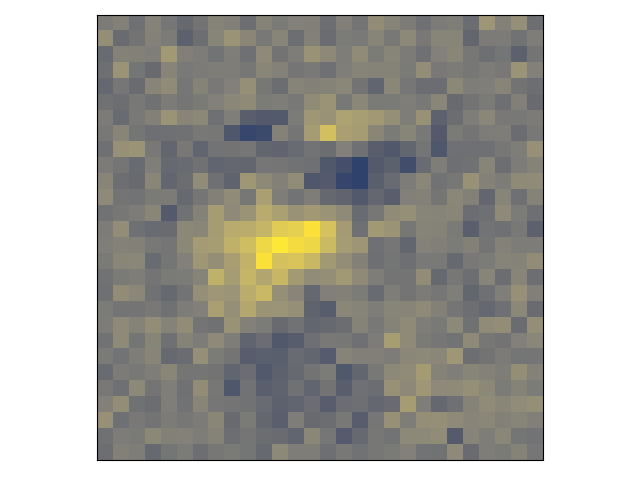}
 & 
\includegraphics[width=0.07\linewidth]{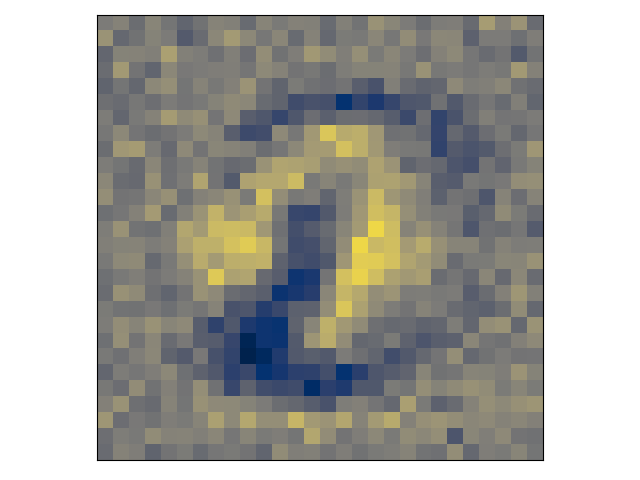}
 & 
\\
 & 
\includegraphics[width=0.07\linewidth]{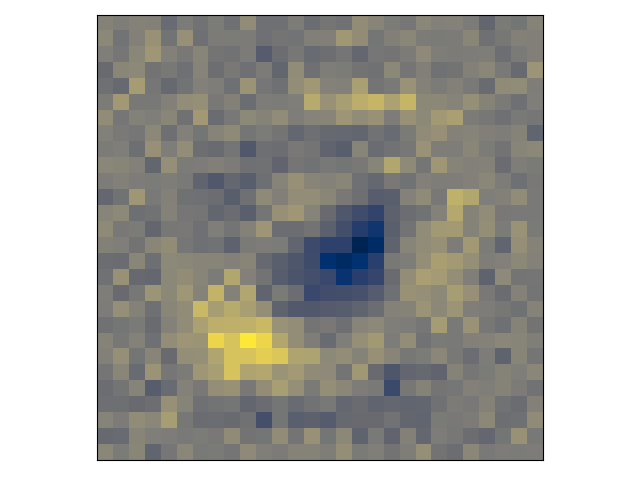}
 & 
\includegraphics[width=0.07\linewidth]{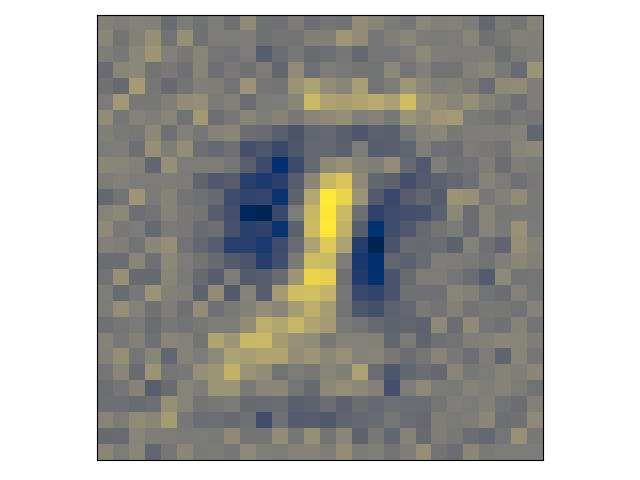}
 & 
\includegraphics[width=0.07\linewidth]{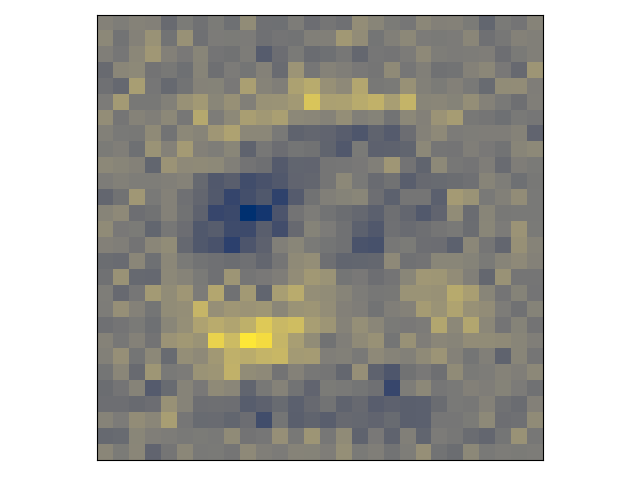}
 & 
\includegraphics[width=0.07\linewidth]{mnist_basis_elements/ovo/3-9_u1.png}
 & 
\includegraphics[width=0.07\linewidth]{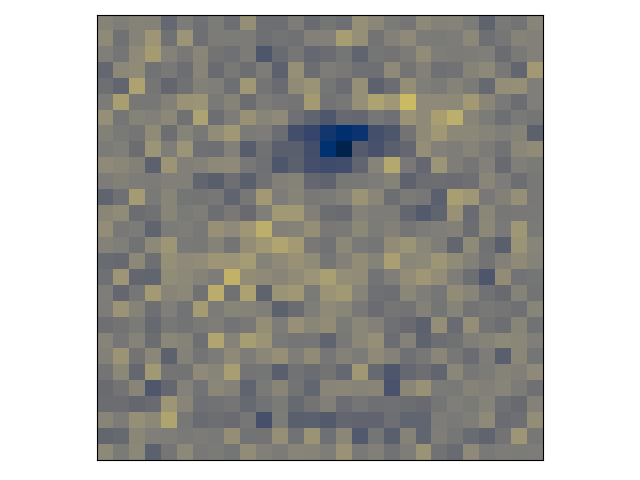}
 & 
\includegraphics[width=0.07\linewidth]{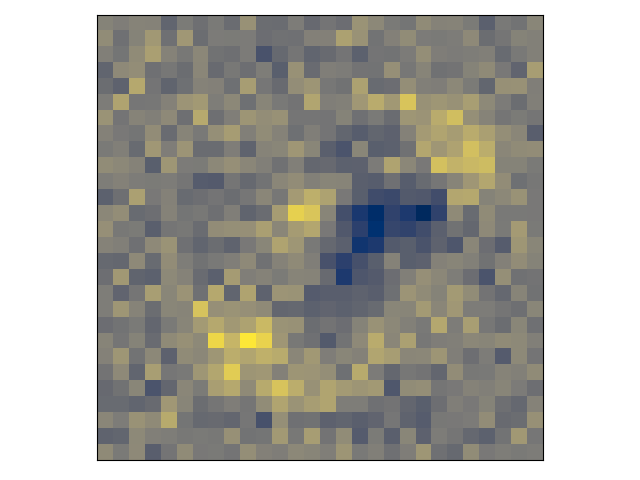}
 & 
\includegraphics[width=0.07\linewidth]{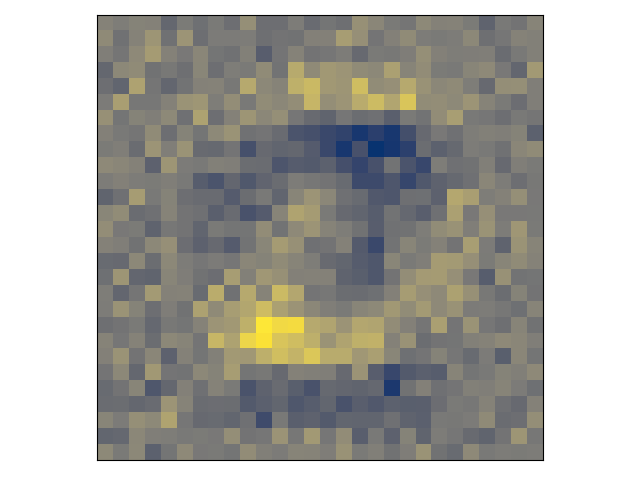}
 & 
\includegraphics[width=0.07\linewidth]{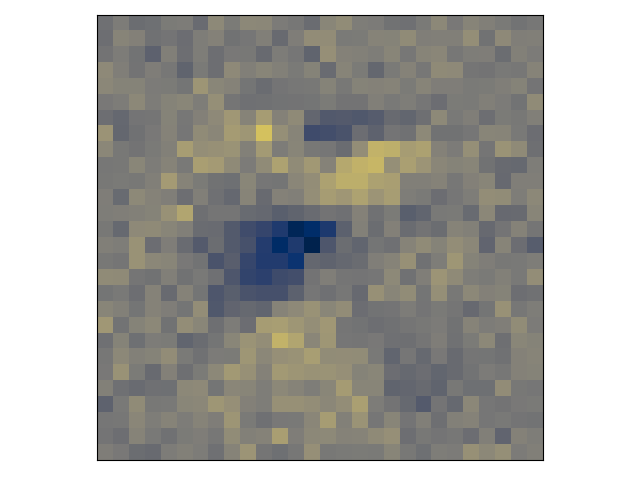}
 & 
\includegraphics[width=0.07\linewidth]{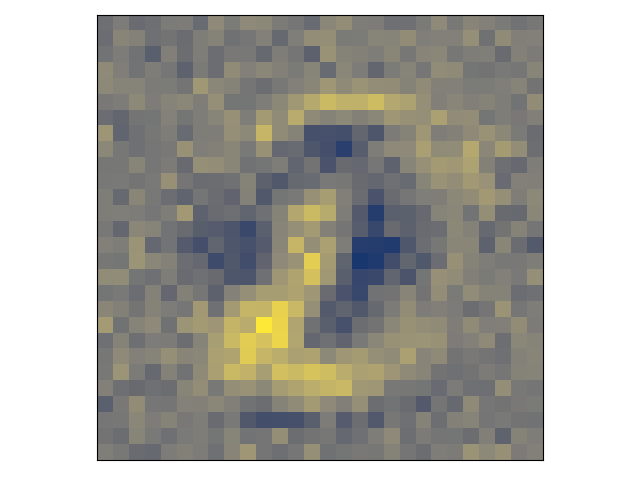}
 & 
\\
\midrule

    \end{tabular}
    \caption{All basis vectors found by \PKM{} applied to MNIST with OVO strategy.}
    \label{tab:mnist_all_ovo}
\end{table*}

\end{document}